\def\eqref#1{equation~\ref{#1}}
\def\1{\bm{1}}
\DeclareMathAlphabet{\mathsfit}{\encodingdefault}{\sfdefault}{m}{sl}
\SetMathAlphabet{\mathsfit}{bold}{\encodingdefault}{\sfdefault}{bx}{n}
\newtheorem{theorem}{Theorem}
\newtheorem{lemma}{Lemma}
\newtheorem{definition}{Definition}
\title{Lossy Common Information in a Learnable Gray-Wyner Network}
\author{Anderson de Andrade, Alon Harell \& Ivan V. Baji\'{c}\\
School of Engineering Science\\
Simon Fraser University\\
Burnaby, BC, Canada\\
\texttt{\{anderson\_de\_andrade,alon\_harell,ibajic\}@sfu.ca}}
\begin{document}

\maketitle

\begin{abstract}
Many computer vision tasks share substantial overlapping information, yet conventional codecs tend to ignore this, leading to redundant and inefficient representations. The Gray-Wyner network, a classical concept from information theory, offers a principled framework for separating common and task-specific information. Inspired by this idea, we develop a learnable three-channel codec that disentangles shared information from task-specific details across multiple vision tasks. We characterize the limits of this approach through the notion of lossy common information, and propose an optimization objective that balances inherent tradeoffs in learning such representations. Through comparisons of three codec architectures on two-task scenarios spanning six vision benchmarks, we demonstrate that our approach substantially reduces redundancy and consistently outperforms independent coding. These results highlight the practical value of revisiting Gray-Wyner theory in modern machine learning contexts, bridging classic information theory with task-driven representation learning.
\end{abstract}

\section{Introduction}

It is often the case that a machine task -- classification, recognition, etc. -- requires only a subset of the information provided as input. We can interpret neural networks as processes that discard irrelevant information from a signal so that its predictions are in a probability space similar to that of the target \citep{Tishby2015DeepLA}. In multi-task settings, the same input is used to perform different tasks. These tasks have semantically different targets and hence might require different subsets of the input information.

Whenever tasks are not performed jointly, isolating the information required for each one is critical to ensure that communication is efficient. For example, when only object detection is needed, it would be efficient for a camera to only transmit the information required so the receiving device can perform that task. If it is then decided that semantic segmentation is needed for the same input, it would be efficient for the camera to only transmit the \textit{additional} information necessary, considering that some relevant information has already been transmitted. On the receiver device, it would be efficient if the information from the first transmission, that is also relevant for semantic segmentation, is isolated, so it can be readily used without overhead.

The line of work in \textit{coding for humans and machines} \citep{Choi2022ScalableIC} focuses on this problem. It considers an image reconstruction task together with a computer vision task. It is commonly assumed that all the information used by the computer vision task is relevant to the reconstruction task. Thus, only two separate channels (representations) are often designed: a common channel used by both tasks, and a private channel only used by the reconstruction task. In this work, we focus on a pair of tasks that have some common information (CI) between them, but also private (dedicated) information for each task, establishing three channels: a common channel, and two private channels.

We establish in this work that a complete isolation of the common information needed between two tasks is often unattainable. This also occurs in \textit{lossy coding} (compression), where we send less information (\textit{rate}, e.g. \textit{bitrate}), which inevitably performs the tasks with a  higher error (\textit{distortion}). A system must then decide between transmitting some of the non-common information on the common channel, or some of the common information on the private channels. In the former case, the \textit{transmit rate} -- the amount of information transmitted when performing both tasks on one device -- can be optimal, but the \textit{receive rate} -- the amount of information transmitted by performing each task on a different device -- will not be optimal. The converse is true in the latter case.

We propose a neural network architecture that is able to separate the common information between two tasks. We compare the proposed architecture against other more intuitive architectures we designed, and include a theoretical justification for their difference in performance.  We also propose a loss function that can optimize a codec for the transmit rate, the receive rate, or a tradeoff between both. We show how our methods perform on different pairs of computer vision tasks.     

\section{Previous work}

In information theory, \textit{source coding} methods convert a sequence of symbols from an information source into a sequence of bits, allowing data compression. Our work is closely related to the Gray-Wyner Network (GWN) \citep{Gray1974SourceCF}. It is a source coding problem connecting two sources with two receivers via a common channel and two private channels. The work defines the \textit{Gray-Wyner region} as a set of achievable rates (e.g. bitrates) for the three channels, in both lossy and lossless input reconstruction tasks.

There are multiple notions of common information in literature \citep{ViswanathaAR14}, including \textit{mutual information}. In this work, we discuss Wyner's common information \citep{Wyner75} and Gács-Körner (GK) common information \citep{GacsKorner}. These quantities are located in the Gray-Wyner achievable region. They are related by a tradeoff between the total transmit and receive rate \citep{ViswanathaAR14}.

The focus of seminal work on \textit{learnable image coding} \citep{TheisSCH17, balle2018variational, HeYPMQW22} has been image lossy coding (compression) as a single task. They consist of a lossy \textit{analysis transform} (encoder) and a \textit{synthesis transform} (decoder) acting as an autoencoder \citep{balle2018variational} such that the latent representation is better suited for coding, achieving a lower rate. An \textit{entropy model} uses various types of context to predict the distribution of a target representation. When optimizing for rate-distortion performance \citep{TheisSCH17}, the entropy model effectively induces an information bottleneck \citep{tishby2000information} on the target representation. A higher penalty on the probability estimates of the entropy model for the target representation produces lower rates at the expense of a higher task distortion (error).

The work in coding for humans and machines uses similar architectures to those used in learnable image coding. In \citet{Choi2022ScalableIC}, the output representation of an analysis transform is split in two. The computer vision task uses one representation over a common channel. The image reconstruction task uses the other representation over a private channel in addition to the representation from the common channel. Better rate-distortion performance on the computer vision task was achieved when using two separate analysis transforms, one for each of these channels \citep{ForoutanHAB23}. However, the common channel had information that was not fully utilized by the reconstruction task. An ad-hoc reconstruction task for the common channel was shown to increase the usability (compatiblity) of the corresponding representations \citep{AndradeB24}. 

Multitask learnable codecs exist in the literature \citep{abs-2103-04178,FengJGFGHZSC22,GuoSZCYD24}. They propose one or more common channels to perform several tasks, without private channels. Their rate is optimal only when all the tasks involved are performed jointly.

In representation learning literature, several information-theoretic approaches propose variational autoencoders (VAEs) that learn disentangled representations of a source \citep{ChenCDHSSA16,HigginsMPBGBML17,ChenLGD18}. These approaches are unsupervised in nature and thus do not distill information for a specific task or isolate the common information between tasks. A more related approach \citep{DuboisBUM21} proposes a channel that can achieve high performance in a set of predictive tasks, as long as they are invariant under a set of transformations. Although their proposed methods are also unsupervised, the right set of transformations can be used to isolate common information between tasks. However, finding these transformations for non-trivial tasks is an open question. Several variational methods have been proposed to measure the mutual information between two sources \citep{PooleOOAT19}. Although they might seem useful as training objectives, these methods have significant tradeoffs between bias and variance. Moreover, they do not naturally offer the means to code (compress) the resulting representations. Compared against these existing techniques, our work directly addresses the isolation of common information between tasks and its efficient transmission.

\subsection{Preliminaries}
\label{sec:preliminaries}

\begin{figure}
    \begin{subfigure}{0.555\linewidth}
        \includegraphics[width=\linewidth]{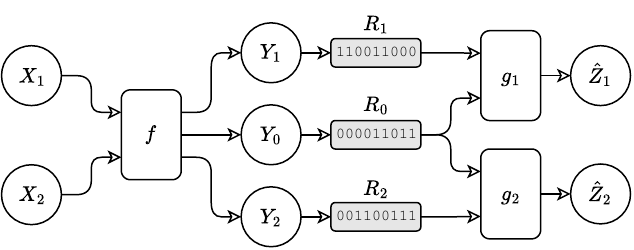}
        \vspace*{0.75cm}
        \caption{Gray-Wyner Network}
    \end{subfigure}
    \begin{subfigure}{0.44\linewidth}
        \includegraphics[width=\linewidth]{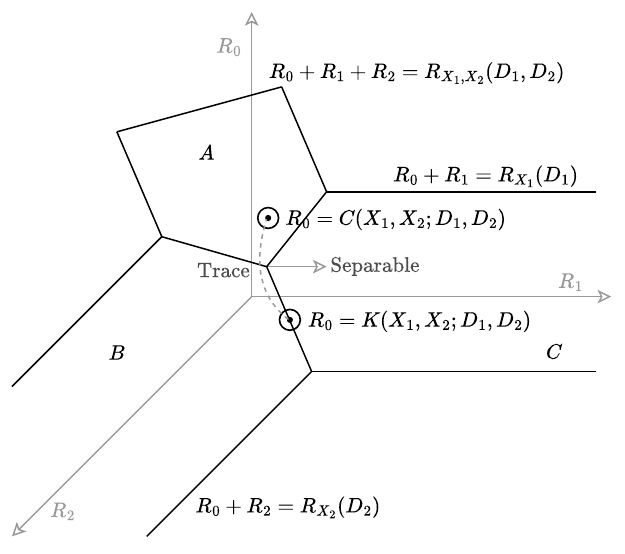}
        \caption{Gray-Wyner region}
        \label{fig:region}
    \end{subfigure}
    \caption{Diagram of the Gray-Wyner Network and a lower bound on its achievable region. The lower bound is given by planes $A$, $B$, and $C$. Plane $A$ corresponds to the \textit{Pangloss}, the contour of the achievable region that achieves $\smash{R_{X_1,X_2}(D_1,D_2)}$. Points with $R_0 = C(X_1,X_2;D_1,D_2)$ are found on it. Planes $B$ and $C$ achieve $\smash{R_0 + R_1 = R_{X_1}(D_1)}$ and $\smash{R_0 + R_2 = R_{X_2}(D_2)}$, respectively. On the intersection of both of these planes, we can find a point with $R_0 = K(X_1,X_2;D_1,D_2)$. A trace connecting both points offers tradeoffs between the transmit and receive rates. If the mutual information between tasks, at distortions $(D_1, D_2)$, is separable, both points coincide at the now achievable \textit{Separable} point. We show that points close to it, whether achievable or not, serve as bounds for both lossy common information measures.}
    \label{fig:gwn}
\end{figure}

We present the Gray-Wyner Network \citep{Gray1974SourceCF} and define the two notions of common information that define the boundaries of the transmit-receive tradeoff on which we want to operate. Let $X_1$ and $X_2$ be two source random variables. In the Gray-Wyner Network, an analysis transform (encoder) $\smash{(Y_0, Y_1, Y_2) = f(X_1, X_2)}$ produces one common and two private \textit{discrete} representations with corresponding rates $\smash{(R_0, R_1, R_2)}$. Two synthesis transforms (decoders) $\smash{\hat{Z}_1 = g_1(Y_0, Y_1)}$ and $\smash{\hat{Z}_2 = g_2(Y_0, Y_2)}$ predict the targets $Z_1$ and $Z_2$ of two dependent tasks, producing distortions $\smash{D_1 = d_1(\hat{Z}_1,Z_1) = \mathbb{E}[\hat{d}_1(\hat{Z}_1, Z_1)]}$ and $\smash{D_2 = d_2(\hat{Z}_2,Z_2) = \mathbb{E}[\hat{d}_2(\hat{Z}_2, Z_2)]}$, respectively. In our context, the distortion functions $\smash{\hat{d}_1}$ and $\smash{\hat{d}_2}$ are task losses for targets $Z_1$ and $Z_2$. We assume that one source does not have exclusive information that could assist its non-corresponding task, implying the Markov conditions:
\begin{align}
    \label{eq:conditions-targets}
    Z_2 \leftrightarrow X_2 \leftrightarrow X_1
    ,
    &&
    Z_1 \leftrightarrow X_1 \leftrightarrow X_2
    .
\end{align}

The set of all \textit{achievable} rate tuples $(R_0,R_1,R_2)$ for distortions $(D_1, D_2)$ is given by the Gray-Wyner region $\smash{\mathcal{R}_{\mathrm{GW}}(D_1,D_2)}$. The contour of this region lower-bounds all other points in this convex set and are considered optimal in that they optimize a particular rate-distortion function.

A rate-distortion function determines the minimal rate that should be communicated over a channel, so that input signals (sources) can be reconstructed without exceeding expected distortions. The joint rate-distortion function \citep{Cover} is given as $R_{X_1,X_2}(D_1,D_2)$. In our context, we define it as the minimum rate required to encode $\smash{(\hat{Z}_1, \hat{Z}_2)}$ jointly as a function of $(X_1, X_2)$ so that tasks $\smash{(\hat{d}_1, \hat{d}_2)}$ can be performed with at most $(D_1, D_2)$ distortion. Marginal rate-distortion functions are given as $R_{X_1}(D_1)$ and $R_{X_2}(D_2)$, with similar definitions applying only to one task and one corresponding source. The conditional rate-distortion function $R_{X_1|X_2}(D_1)$ is the minimum rate required to encode a function of $X_1$, such that the task $\smash{\hat{d}_1}$ can be performed with at most distortion $D_1$, when $X_2$ is available to both the encoder and the decoder. 

In this setting, using previous definitions, we define the Wyner's lossy common information as:
\begin{align}
    \label{eq:wci}
    C(X_1,X_2; D_1,D_2) = \inf I(X_1,X_2; U),
\end{align}
where the infimum is over all joint densities $\smash{P(X_1,X_2,\hat{Z}_1,\hat{Z}_2,U)}$, under these Markov conditions:
\begin{align}
    \label{eq:wci-markov-paper}
    \hat{Z}_1 \leftrightarrow U \leftrightarrow \hat{Z}_2
    ,
    &&
    (X_1, X_2) \leftrightarrow (\hat{Z}_1,\hat{Z}_2) \leftrightarrow U
    ,
\end{align}
and where $\smash{P(\hat{Z}_1,\hat{Z}_2|X_1,X_2) \in \mathcal{P}^{X_1,X_2}_{D_1,D_2}}$ is any joint distribution that achieves the rate-distortion function at $(D_1,D_2)$, i.e.: $\smash{I(X_1,X_2;\hat{Z}_1,\hat{Z}_2) = R_{X_1,X_2}(D_1,D_2)}$. The mutual information function $I(\cdot;\cdot)$ measures the dependence between two random variables \citep{Cover}. For the conditions in \ref{eq:wci-markov-paper} to hold, $U$ must have \textit{at least} the mutual information between $\smash{\hat{Z}_1}$ and $\smash{\hat{Z}_2}$.

Gács-Körner lossy common information is defined for our particular setting as:
\begin{align}
    \label{eq:gkci}
    K(X_1,X_2; D_1,D_2) = \sup I(X_1,X_2; V)
    ,
\end{align}
where the supremum is over all joint densities $P(X_1,X_2,\hat{Z}_1,\hat{Z}_2,V)$, such that the following Markov conditions are met:
\begin{align}
    \label{eq:gkci-markov-paper}
    X_2 \leftrightarrow X_1  \leftrightarrow V
    ,
    &&
    X_1\leftrightarrow X_2 \leftrightarrow V
    ,
    &&
    X_1 \leftrightarrow \hat{Z}_1 \leftrightarrow V
    ,
    &&
    X_2 \leftrightarrow \hat{Z}_2 \leftrightarrow V
    ,
\end{align}
where $\smash{P(\hat{Z}_1|X_1) \in \mathcal{P}^{X_1}_{D_1}}$ and $\smash{P(\hat{Z}_2|X_2) \in \mathcal{P}^{X_2}_{D_2}}$ are optimal rate-distortion encoders at $D_1$ and $D_2$, respectively, i.e: $I(X_1;\hat{Z}_1) = R_{X_1}(D_1)$ and $I(X_2;\hat{Z}_2) = R_{X_2}(D_2)$. The first two conditions in \ref{eq:gkci-markov-paper} establish that $V$ cannot have information about $X_1$ or $X_2$ that is not mutual. The last two conditions in \ref{eq:gkci-markov-paper} establish that all information in $V$ is present \textit{in both} $\smash{\hat{Z}_1}$ and $\smash{\hat{Z}_2}$.

Wyner's lossy common information, $C(X_1,X_2; D_1,D_2)$, can be thought of as the least information that must be included in the common channel to avoid the total transmit rate exceeding the optimal $R_{X_1,X_2}(D_1,D_2)$. Conversely, Gács-Körner common information, $K(X_1,X_2; D_1,D_2)$, is the most information that can be included in the common channel while maintaining the optimal receive rate $R_{X_1}(D_1) + R_{X_2}(D_2)$.

The total transmit rate is $R_t = R_0 + R_1 + R_2 \geq R_{X_1,X_2}(D_1, D_2)$ and the total receive rate is $R_r = 2R_0 + R_1 + R_2 \geq R_{X_1}(D_1) + R_{X_2}(D_2)$. Wyner's and Gács-Körner common information can be related through these concepts \citep{ViswanathaAR14}. When the transmit rate is optimal, such that $R_t = R_{X_1,X_2}(D_1,D_2)$, we have that the minimum $R_r$ is $R_{X_1,X_2}(D_1,D_2) + C(X_1,X_2; D_1,D_2)$. Such points in the contour of $\mathcal{R}_{\mathrm{GW}}(D_1,D_2)$ are achieved when $R_0=C(X_1,X_2,D_1,D_2)$. When the receive rate is optimal such that $R_r = R_{X_1}(D_1) + R_{X_2}(D_2)$, we have that the minimum $R_t$ is $R_{X_1}(D_1) + R_{X_2}(D_2) - K(X_1,X_2; D_1,D_2)$. This point in the contour of $\mathcal{R}_{\mathrm{GW}}(D_1,D_2)$ is achieved when $R_0 = K(X_1,X_2,D_1,D_2)$. This implies that minimizing one type of rate comes at the cost of increasing the other, establishing the transmit-receive tradeoff.

Figure~\ref{fig:gwn} visually describes these preliminary concepts. See Appendix~\ref{appendix:preliminaries} for more definitions. 

\section{Contributions}

\subsection{Bounds for lossy common information}
\label{sec:bounds}
We extend a result from \citet{Wyner75} in the lossless setting to the lossy case, showing bounds that separate the two lossy common information terms discussed. The bounds are expressed in terms of interaction information $\smash{I(X_1,X_2;\hat{Z}^*_1;\hat{Z}^*_2)}$, which is a generalization of the mutual information for more than two variables \citep{Ting1962OnTA, Yeung1991ANO}. See Appendix~\ref{appendix:preliminaries} for its definition.

\begin{restatable}{theorem}{lcib}
\label{theorem:lcib}
Let $\mathcal{\hat{Z}}^{(t)}_{D_1, D_2}$ be the set of tuples $\smash{(\hat{Z}_1, \hat{Z}_2)}$ that achieve $\smash{R_{X_1,X_2}(D_1, D_2)}$, and $\smash{\mathcal{\hat{Z}}^{(r)}_{D_1, D_2}}$ be the set of tuples $(\hat{Z}_1, \hat{Z}_2)$, such that $\smash{\hat{Z}_1}$ achieves $R_{X_1}(D_1)$, and $\smash{\hat{Z}_2}$ achieves $R_{X_2}(D_2)$. Then:
\begin{align}
    K(X_1,X_2; D_1,D_2)
    &
    \leq
    \max_{(\hat{Z}_1, \hat{Z}_2) \in \mathcal{\hat{Z}}^{(r)}_{D_1, D_2}}
    I
    (
    X_1
    ,
    X_2
    ;
    \hat{Z}_1
    ;
    \hat{Z}_2
    )
    \\
    &
    \leq
    \min_{(\hat{Z}_1, \hat{Z}_2) \in \mathcal{\hat{Z}}^{(t)}_{D_1, D_2}}
    I
    (
    X_1
    ,
    X_2
    ;
    \hat{Z}_1
    ;
    \hat{Z}_2
    )
    \leq
    C(X_1,X_2; D_1,D_2)
    .
\end{align}
We have equality everywhere iff the maximum and minimum coincide at $\smash{(\hat{Z}_1^*, \hat{Z}_2^*)}$, and we can represent $\smash{\hat{Z}_1^*}$ as $\smash{(\hat{Z}_1', W)}$ and $\smash{\hat{Z}_2^*}$ as $\smash{(\hat{Z}_2', W)}$ such that Conditions~\ref{eq:wci-markov-paper} and \ref{eq:gkci-markov-paper} hold for $\smash{\hat{Z}_1^*}$, $\smash{\hat{Z}_2^*}$, and $W$.
\end{restatable}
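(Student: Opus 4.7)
The plan is to extend Wyner's 1975 lossless bound $K(X_1,X_2)\leq I(X_1;X_2)\leq C(X_1,X_2)$ to the rate-distortion regime by replacing the two-way mutual information with the three-way interaction information $I(X_1,X_2;\hat{Z}_1;\hat{Z}_2)$, evaluated at marginal-RD-optimal reconstructions on the lower side and joint-RD-optimal ones on the upper side. I will prove the two outer inequalities in parallel using the Markov structure of the $C$- and $K$-defining auxiliaries, obtain the middle inequality as a separate exchange step, and finish with the equality characterization by tracking saturation of each data-processing inequality.

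For the upper bound $\min_{(\hat{Z}_1,\hat{Z}_2)\in\mathcal{\hat{Z}}^{(t)}_{D_1,D_2}} I(X_1,X_2;\hat{Z}_1;\hat{Z}_2)\leq C$, I fix a minimizer $U^*$ in \eqref{eq:wci} paired with joint-RD-optimal witnesses $(\hat{Z}_1^\star,\hat{Z}_2^\star)$. The interaction identity
\[
I(X_1,X_2;\hat{Z}_1^\star;\hat{Z}_2^\star) = I(\hat{Z}_1^\star;\hat{Z}_2^\star) - I(\hat{Z}_1^\star;\hat{Z}_2^\star\mid X_1,X_2)
\]
bounds the left-hand side above by $I(\hat{Z}_1^\star;\hat{Z}_2^\star)$ after dropping the nonnegative conditional term. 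The Wyner Markov chain $\hat{Z}_1^\star-U^*-\hat{Z}_2^\star$ then yields $I(\hat{Z}_1^\star;\hat{Z}_2^\star)\leq I(\hat{Z}_1^\star;U^*)$ by the data-processing inequality, and the chain $(X_1,X_2)-(\hat{Z}_1^\star,\hat{Z}_2^\star)-U^*$ combined with the chain rule yields $I(\hat{Z}_1^\star;U^*)\leq I(X_1,X_2;U^*)=C$.

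For the lower bound $K\leq\max_{(\hat{Z}_1,\hat{Z}_2)\in\mathcal{\hat{Z}}^{(r)}_{D_1,D_2}} I(X_1,X_2;\hat{Z}_1;\hat{Z}_2)$, I fix a maximizer $V^*$ in \eqref{eq:gkci} with marginal-RD-optimal witnesses $(\hat{Z}_1^\star,\hat{Z}_2^\star)$. Because each $\hat{Z}_i^\star$ depends on $X_i$ alone (combining marginal optimality with \eqref{eq:conditions-targets}), $I(\hat{Z}_1^\star;\hat{Z}_2^\star\mid X_1,X_2)=0$ and the interaction information simplifies to $I(\hat{Z}_1^\star;\hat{Z}_2^\star)$. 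The GK conditions \eqref{eq:gkci-markov-paper} give $K=I(X_1,X_2;V^*)=I(X_1;V^*)=I(X_2;V^*)$ and imply $P(V^*\mid X_1)=P(V^*\mid\hat{Z}_1^\star)$ and $P(V^*\mid X_2)=P(V^*\mid\hat{Z}_2^\star)$, with both conditional laws equal to $P(V^*\mid X_1,X_2)$. This joint factorization realizes $V^*$ as a common channel of both $\hat{Z}_1^\star$ and $\hat{Z}_2^\star$ and thereby bounds $K$ above by $I(\hat{Z}_1^\star;\hat{Z}_2^\star)$, mirroring Wyner's lossless ``common function of both sources'' step. The middle sandwich $\max\leq\min$ then follows by an exchange argument: any marginal-RD-optimal pair can be coupled to a joint-RD-optimal pair through a shared Wyner intermediary $U$, under which the upper-bound derivation monotonically increases the interaction information.

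Equality throughout the chain forces every DPI step above to saturate. The $(\Leftarrow)$ direction is a direct verification: given $\hat{Z}_i^\star=(\hat{Z}_i',W)$ with the stated conditions, $W$ simultaneously plays the role of $U$ in \eqref{eq:wci-markov-paper} and $V$ in \eqref{eq:gkci-markov-paper}, pinching all four terms in the chain to $I(X_1,X_2;W)$. For $(\Rightarrow)$, saturation of each DPI step forces conditional independences that, via a standard sufficient-statistic refinement, produce the common factor $W$ and the independent remainders $\hat{Z}_1',\hat{Z}_2'$. The main technical obstacle I expect is the lower-bound step $K\leq I(\hat{Z}_1^\star;\hat{Z}_2^\star)$: whereas the lossless analogue exploits the determinism of $V$ in both sources, the lossy $V^*$ is only conditionally independent of each $X_i$ given $\hat{Z}_i^\star$, so the argument must combine the factorization identities above rather than appeal to pure DPI.
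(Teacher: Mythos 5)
Your upper-bound derivation contains a concrete gap. After reducing to $I(\hat{Z}_1^\star;\hat{Z}_2^\star)\leq I(\hat{Z}_1^\star;U^*)$ via the Markov chain $\hat{Z}_1^\star\leftrightarrow U^*\leftrightarrow \hat{Z}_2^\star$, you claim $I(\hat{Z}_1^\star;U^*)\leq I(X_1,X_2;U^*)$ from the chain $(X_1,X_2)\leftrightarrow(\hat{Z}_1^\star,\hat{Z}_2^\star)\leftrightarrow U^*$ and the chain rule. This is false: that Markov condition gives $I(X;U^*)\leq I(\hat{Z}^\star;U^*)$ (the reconstruction tuple can carry \emph{more} information about $U^*$ than $X$ does, since $U^*$ is downstream of $\hat{Z}^\star$), and chain-ruling $I(\hat{Z}_1^\star;U^*)\leq I(\hat{Z}^\star;U^*)$ only reinforces the wrong direction. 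A simple counterexample: take $X_1=X_2=X\sim\mathrm{Bern}(1/2)$, $\hat{Z}_1^\star=\hat{Z}_2^\star=U^*=X\oplus N$ with $N\sim\mathrm{Bern}(p)$ independent; then $I(\hat{Z}_1^\star;U^*)=1$ while $I(X;U^*)=1-h(p)<1$, yet both Wyner Markov conditions hold. The reason you got into trouble is that the step ``$I(X;\hat{Z}_1^\star;\hat{Z}_2^\star)\leq I(\hat{Z}_1^\star;\hat{Z}_2^\star)$'' discards the conditioning on $(X_1,X_2)$ too early, leaving you with a pure pairwise quantity that the Wyner conditions are too weak to relate back to $I(X;U^*)$. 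The paper instead stays with the four-variable interaction information $I(X;\hat{Z}_1;\hat{Z}_2;U)$, uses Condition~\ref{eq:wci-markov-paper} to kill $I(X;\hat{Z}_1;\hat{Z}_2\mid U)$, expands $I(X;\hat{Z}_1;\hat{Z}_2;U)=I(X;\hat{Z};U)-I(X;\hat{Z}_1;U\mid\hat{Z}_2)-I(X;\hat{Z}_2;U\mid\hat{Z}_1)$, and observes (via the second Markov condition) that those two conditional interaction terms equal $I(X;U\mid\hat{Z}_2)$ and $I(X;U\mid\hat{Z}_1)$, hence are nonnegative; the final reduction $I(X;\hat{Z};U)=I(X;U)$ again uses the second Markov condition. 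You need an analogue of this coupled decomposition, not a sequence of unconditional DPIs.

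The lower-bound sketch is in the right spirit but also hand-waves the crucial identities: $X_1\leftrightarrow\hat{Z}_1\leftrightarrow V$ does not give $P(V\mid X_1)=P(V\mid\hat{Z}_1)$ pointwise, only a mixture relation, and you never actually derive $K\leq I(\hat{Z}_1^\star;\hat{Z}_2^\star)$ from the G\'acs--K\"orner conditions. The paper again resorts to the four-variable expansion, this time showing (from Conditions~\ref{eq:gkci-markov-paper} plus the fact that $\hat{Z}_1^\star,\hat{Z}_2^\star$ are conditionally independent given $X$) that $I(X;\hat{Z}_1;V\mid\hat{Z}_2)$, $I(X;\hat{Z}_2;V\mid\hat{Z}_1)$, and $I(X;V\mid\hat{Z})$ all vanish, so $I(X;\hat{Z}_1;\hat{Z}_2)\geq I(X;\hat{Z}_1;\hat{Z}_2;V)=I(X;V)$. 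Finally, the middle inequality $\max\leq\min$ is not a quick exchange argument: it is a standalone lemma in the paper, proved by decomposing $I(X;\hat{Z}_1,\hat{Z}_2)$ into marginal, cross, and interaction terms and arguing carefully about which terms the joint versus marginal RD objectives permit; your ``shared Wyner intermediary $U$ under which the upper-bound derivation monotonically increases the interaction information'' does not substitute for that argument. You should prove that lemma explicitly before using it to pass from the interaction-information bounds to the stated $\max$/$\min$.
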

\textit{Proof.} See Appendix~\ref{appendix:ci}.

Theorem~\ref{theorem:lcib} allows to interpret the lossy versions of Gács-Körner and Wyner's common information similarly to their lossless counterparts. Any set of tuples $\smash{(\hat{Z}_1, \hat{Z}_2)}$ that achieve the transmit rate will always have at least the amount of interaction information $\smash{I(X_1, X_2; \hat{Z}_1; \hat{Z}_2)}$ of any set of tuples that achieve the receive rate. If there is a gap of interaction information between the transmit and receive sets of tuples, exploring the transmit-receive tradeoff will produce tuples outside of those two sets that cover such gap.
 
The conditions for equality everywhere imply that the two common information terms are the same when the interaction information $\smash{I(X_1, X_2; \hat{Z}^*_1; \hat{Z}^*_2)}$ is fully separable from private information or \textit{other excess common information} that does not help reach optimal rate-distortion values. Because of the latter reason, this separation can be even more difficult to attain than in the lossless case. To see this, note that Gács-Körner common information in the discrete case is the entropy of the probability distribution of a point in the sample space of $(X_1, X_2)$ belonging to a partition in an ergodic decomposition of the stochastic matrix defining $(X_1,X_2)$ \citep{GacsKorner}. This means that to achieve $\smash{I(X_1,X_2;\hat{Z}^*_1;\hat{Z}^*_2)}$, this information must be separable such that the stochastic matrix $\smash{P(\hat{Z}_1,\hat{Z}_2)}$ can be written as:
\begin{align}
    \begin{bmatrix}
        A_1 & & 0 \\
        & \ddots & \\
        0 & & A_K
    \end{bmatrix}
\end{align}
where $A_1,...,A_K$ are probability matrices defined as product of marginals, i.e., with no mutual information. This implies that these random variables can be represented as $\smash{\hat{Z}_1^* = (\hat{Z}'_1, W)}$ and $\smash{\hat{Z}_2^* = (\hat{Z}'_2, W)}$, which, with $W$ being a function of $X_1$ or $X_2$, implies that it can be isolated from $X_1$ or $X_2$. If some of the required mutual information is not separable, it must be left out of $W$ and thus $\smash{I(X_1, X_2;\hat{Z}^*_1;\hat{Z}^*_2)}$ cannot be produced. The work of \citet{GacsKorner} shows that Gács-Körner common information is often very small. In fact, it is zero for Gaussian sources with correlation $1 - \rho$ \citep{ViswanathaAR14}, which is usually a distribution producing \textit{refinable} results in information theory \citep{Cover}. Because we can often expect in practice to have a noticeable gap between the two common information measures discussed, there is a significant motivation to explore the transmit-receive tradeoff.

\subsection{Transmit-receive tradeoff optimization}

As previosly discussed, there is a tradeoff between optimizing for the transmit rate, resulting in more information available on the common channel than the mutual information, which increases the receive rate, or optimizing for the receive rate, resulting in less than the mutual information on the common channel, which increases the transmit rate. We propose an objective that can optimize for this tradeoff.

The work of \citet{Gray1974SourceCF} establishes the following objective for optimizing the Gray-Wyner Network:
\begin{align}
    \label{eq:wg-loss}
    T(\alpha_1, \alpha_2; D_1,D_2)
    \triangleq
    \inf
    \left\{
    I(X_1, X_2; Y_0)
    +
    \alpha_1
    R_{X_1|Y_0}(D_1)
    +
    \alpha_2
    R_{X_2|Y_0}(D_2)
    \right\}
    ,
\end{align}
where $0 \leq \alpha_1, \alpha_2 \leq 1$, and $\alpha_1 + \alpha_2 \geq 1$, and the infimum is over all probability distributions $\smash{P_{Y_0} \in \mathcal{P}_{\mathcal{Y}_0}}$ with sample space $\mathcal{Y}_0$. The arguments $\alpha_1$ and $\alpha_2$ specify the transmission cost for each of the three channels. The values of $T(\alpha_1, \alpha_2; D_1, D_2)$ over the domain of $\alpha_1$ and $\alpha_2$ define the contour of the Gray-Wyner region $\smash{\mathcal{R}_{\mathrm{GW}}(D_1,D_2)}$.

Under assumptions suitable to our proposed method, we can express this objective in terms of an optimization over families of functions:
\begin{restatable}{theorem}{loss}
\label{theorem:loss}
Assume that $Y_0 = f_0(X_1,X_2); f_0 \in \mathcal{F}_0, Y_1 = f_1(X_1); f_1 \in \mathcal{F}_1, Y_2 = f_2(X_2); f_2 \in \mathcal{F}_2$ are all deterministic functions of their corresponding inputs, and that $g_1 \in \mathcal{G}_1$ and $g_2 \in \mathcal{G}_2$, where $\smash{\mathcal{F}_{\{0,1,2\}}}$ and $\smash{\mathcal{G}_{\{1,2\}}}$ are families of functions such that there exits a $f_0, f_1, f_2, g_1$, and $g_2$ in their respective families that achieve $T(\alpha_1, \alpha_2; D_1, D_2)$. Then:
\begin{align}
    \label{eq:loss-entropy}
    T(\alpha_1, \alpha_2; D_1, D_2)
    =
    \inf
    \left\{
    H(Y_0)
    +
    \alpha_1 H(Y_1|Y_0)
    +
    \alpha_2 H(Y_2|Y_0)
    \right\}
    ,
\end{align}
where $H(\cdot)$ is Shannon's entropy function, and $H(\cdot|\cdot)$ is the conditional entropy function \citep{Cover}. The infimum is over all $f_0, f_1, f_2, g_1$, and $g_2$ in their corresponding family of functions, such that we obtain, at most, distortions $D_1$ and $D_2$. 
\end{restatable}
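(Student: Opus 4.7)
My plan is to prove the identity by two matching inequalities. Both directions hinge on two facts that follow from the determinism assumption: since $Y_0$ is a function of $(X_1, X_2)$ we have $I(X_1, X_2; Y_0) = H(Y_0)$, and since $Y_i = f_i(X_i)$ is a function of $X_i$, the pair $(f_i, g_i)$ realizes a valid test channel for the conditional rate-distortion problem whose associated rate is $H(Y_i | Y_0)$.

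For the direction $\inf\{H(Y_0) + \alpha_1 H(Y_1|Y_0) + \alpha_2 H(Y_2|Y_0)\} \ge T(\alpha_1, \alpha_2; D_1, D_2)$, I would fix any admissible $(f_0, f_1, f_2, g_1, g_2)$ from the families whose decoders attain distortions at most $(D_1, D_2)$, set $Y_j$ accordingly, and write $\hat{Z}_i = g_i(Y_0, Y_i)$. Determinism of $f_0$ gives $I(X_1, X_2; Y_0) = H(Y_0)$. The pair $(f_i, g_i)$ defines a test channel reaching distortion $D_i$, so by the infimum in the definition of $R_{X_i | Y_0}(D_i)$ we have $R_{X_i | Y_0}(D_i) \le I(X_i; \hat{Z}_i | Y_0)$. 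Data processing along $X_i \to Y_i \to \hat{Z}_i$ conditioned on $Y_0$, together with $H(Y_i | X_i, Y_0) = 0$, yields $I(X_i; \hat{Z}_i | Y_0) \le I(X_i; Y_i | Y_0) = H(Y_i | Y_0)$. Summing with weights $1, \alpha_1, \alpha_2$, the entropy expression dominates $I(X_1, X_2; Y_0) + \alpha_1 R_{X_1 | Y_0}(D_1) + \alpha_2 R_{X_2 | Y_0}(D_2) \ge T$ by Eq.~\ref{eq:wg-loss}, and taking the infimum over admissible tuples preserves the inequality.

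For the reverse inequality, I would invoke the hypothesis that the families contain a tuple $(f_0^*, f_1^*, f_2^*, g_1^*, g_2^*)$ achieving $T(\alpha_1, \alpha_2; D_1, D_2)$. I interpret this as saying that $Y_0^* = f_0^*(X_1, X_2)$ realizes the outer infimum in Eq.~\ref{eq:wg-loss}, while each $(f_i^*, g_i^*)$ realizes the conditional rate-distortion infimum defining $R_{X_i | Y_0^*}(D_i)$ as a deterministic encoder paired with conditional entropy coding, so that all of the inequalities above collapse to equalities at this tuple. This gives $H(Y_0^*) + \alpha_1 H(Y_1^* | Y_0^*) + \alpha_2 H(Y_2^* | Y_0^*) = T$, which upper bounds the infimum of the entropy objective. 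Combining the two directions closes the proof.

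The main obstacle is this reverse direction: the families $\mathcal{F}_i, \mathcal{G}_i$ must be rich enough to express the optimal conditional rate-distortion test channel as a deterministic $Y_i = f_i(X_i)$ coded at rate $H(Y_i | Y_0)$, even though in general that optimizing test channel is a possibly stochastic $P(\hat{Z}_i | X_i, Y_0)$. This is precisely what the hypothesis granting optimizers inside the families is asserting; with it in hand, the remainder is routine manipulation of deterministic-map conditional entropies together with the data-processing inequality.
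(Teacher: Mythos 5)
Your proof is correct and takes essentially the same approach as the paper's: use determinism of $f_0$ to get $I(X_1,X_2;Y_0) = H(Y_0)$, use determinism of $f_i$ to reduce the conditional rate-distortion term to $H(Y_i|Y_0)$, and invoke the stated richness of the families to swap the optimization over distributions for one over functions. Your version is slightly more careful than the paper's in cleanly distinguishing $Y_i$ from $\hat{Z}_i = g_i(Y_0,Y_i)$ via the conditional data-processing inequality, whereas the paper writes $R_{X_1|Y_0}(D_1) \triangleq \min I(X_1;Y_1|Y_0)$ directly, but this is a refinement of the same argument rather than a different route.
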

\textit{Proof.} See Appendix~\ref{appendix:loss}.

We make use of entropy models to estimate -- and also steer -- the probability distributions of their target representations. With Theorem~\ref{theorem:loss}, we can replace the entropy terms with rate functions $\smash{r_{\{0,1,2\}}}$ given by entropy models:
\begin{gather}
    r_{\{1,2\}}(Y_{\{1,2\}}, Y_0)
    =
    -
    \mathbb{E}_{Y_{\{1,2\}}, Y_0}
    \left[
    \log
    \tilde{P}
    \left(
    Y_{\{1,2\}}
    |
    Y_0
    \right)
    \right] 
    ,
    r_0(Y_0)
    =
    -
    \mathbb{E}
    \left[
    \log
    \tilde{P}
    \left(
    Y_0
    \right)
    \right]
    ,
\end{gather}
where $\smash{\tilde{P}}$ denotes a probability function assumed for the \textit{quantized} target representations $\smash{Y_{\{0,1,2\}}}$, implicitly established by the entropy models.

The work of \citet{Gray1974SourceCF} highlights that $T(\alpha_1, \alpha_2; D_1, D_2)$ is difficult to optimize due to the lack of concavity or convexity in $\smash{P_{Y_0}}$. As such, we propose the Lagrangian relaxation method \citep{HiriartUrruty1993ConvexAA} for the optimization problem in Theorem~\ref{theorem:loss}. Relaxing the distortion constraints can help find better solutions in this non-convex space. The Lagrangian relaxation method is used extensively for rate-distortion optimization \citep{tishby2000information}. Due to the convexity of that problem, it is used for convenience, since the method of Lagrange multipliers could also be used. For this problem, however, we have a stronger motivation for its usage.

Assuming that the private channels have the same cost, such that $\alpha_1 = \alpha_2$, replacing the entropy terms in Equation~\ref{eq:loss-entropy} with rate functions, and with the distortion constraints $\smash{d_1(\hat{Z}_1, Z_1) \leq D_1}$ and $\smash{d_2(\hat{Z}_2, Z_2) \leq D_2}$, the Lagrangian is given as:
\begin{align}
    \label{eq:loss-simple}
    \mathcal{L}
    =
    \inf
    \left\{
    \beta r_0(Y_0) + r_1(Y_1, Y_0) + r_2(Y_2, Y_0)
    +
    \lambda_1 d_1(\hat{Z}_1, Z_1)
    +
    \lambda_2 d_2(\hat{Z}_2, Z_2)
    \right\}
    ,
\end{align}
where $\beta = \nicefrac{1}{\alpha_{\{1,2\}}}$, and the infimum is over the same families of functions in Equation~\ref{eq:loss-entropy}, with the addition of the families of functions defining the three entropy models. The hyper-parameters $\lambda_1$ and $\lambda_2$ control the rate-distortion tradeoff \citep{Cover}.

When $\beta=1$, we optimize for the transmit rate $R_t$. When $\beta = 2$, we optimize for the receive rate $R_r$. As explained in Sections \ref{sec:preliminaries} and \ref{sec:bounds}, optimizing exclusively for the transmit or the receive rate does not guarantee that the common channel will produce Wyner's or Gács-Körner common information. Therefore, values of $\beta$ outside of the range $(1,2)$ could result in suboptimal configurations. If we optimize for a combination of both rates, we could attain points on the trace in the contour of the achievable region $\mathcal{R}_{\mathrm{GW}}(D_1,D_2)$, connecting the operational points corresponding to $C(X,D_1,D_2)$ and $K(X,D_1,D_2)$ \citep{ViswanathaAR14}. When $\beta = \nicefrac{3}{2}$, we equally optimize for both the transmit and receive rates. If Theorem~\ref{theorem:lcib} holds with equality, an optimal codec optimized for $\beta \in (1, 2)$ achieves both common information measures.

\subsection{A learnable Gray-Wyner Network}

\begin{figure}
    \includegraphics[width=\linewidth]{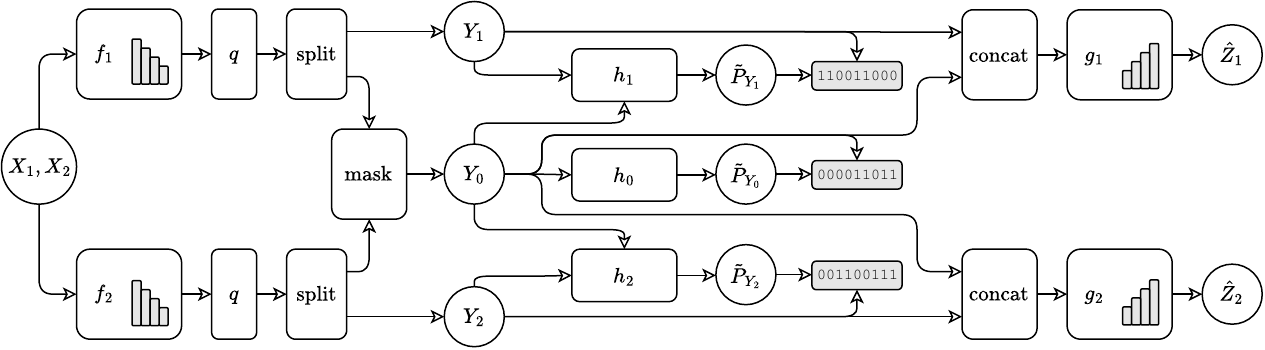}
    \caption{Architecture overview. The parameters for the probability distribution of the target representations are represented as $\smash{\tilde{P}_{Y_{\{0,1,2\}}}}$. Grey binary blocks denote bitstreams with rates $\smash{R_{\{0,1,2\}}}$. For the particular choice of analysis and synthesis transforms used in the experimental evaluation, the grey bars on the corresponding blocks indicate the number of downsample or upsample operations.}
    \label{fig:architecture}
\end{figure}

We now formulate a version of a Gray-Wyner Network that is grounded on the proposed objective function. It separates common and private information between two tasks, as it explores the transmit-receive rate tradeoff. We use learnable entropy models as the rate functions $r_{0,1,2}$ in Eq.~\ref{eq:loss-simple}. These entropy models produce rates that, as part of the objective function, induce the desired type of information in the representations $Y_{0,1,2}$. Moreover, they allow us to efficiently code (compress) the resulting representations.

Figure~\ref{fig:architecture} provides an overview of the proposed architecture. Inputs $X_1$ and $X_2$ are both processed by two analysis transforms $f_1$ and $f_2$. Because each branch of the proposed architecture has access to both sources $X_1$ and $X_2$, all exclusive information from either source is available to assist in performing tasks $Z_1$ or $Z_2$. This effectively removes the requirement for the conditions in \ref{eq:conditions-targets}.

The output of each analysis transform is passed through a quantization function $q$ that discretizes the representation. It has a differentiable training-time approximation that allows gradient propagation. The representation is then split into two tensors, such that:
\begin{align}
    \left( Y_{\{1,2\}}, Y_0^{(\{1,2\})} \right)
    =
    \left(
    q \circ f_{\{1,2\}}
    \right)
    \left(
    X_{\{ 1,2 \}}
    \right)
    .
\end{align}
Then, $\smash{Y_0^{(\{1,2\})}}$ are combined into $Y_0$ so that:
\begin{align}
    \left[ Y_{0} \right]_i
    =
    \begin{cases}
        \nicefrac{1}{2}
        \left(
        \left[ Y_0^{(1)} \right]_i
        +
        \left[ Y_0^{(2)} \right]_i
        \right)
        , & \text{if } \left[ Y_0^{(1)} \right]_i = \left[ Y_0^{(2)} \right]_i \\
        0,              & \text{otherwise,}
    \end{cases}
\end{align}
where $i \in \mathbb{Z_{+}}$ indexes the elements in the tensors. Using auto-differentiation, the $\smash{\nicefrac{1}{2} \, (Y_{0}^{(1)} + Y_{0}^{(2)})}$ expression ensures that gradients flow to both inputs wherever elements match. An auxiliary loss term encourages the two input tensors to match. The augmented loss function is given as:
\begin{align}
    \label{eq:loss}
    \mathcal{L}_{\mathrm{aug}}
    =
    \mathcal{L}
    +
    \mathbb{E}
    \left[
    \frac{\gamma}{\abs{Y_0}}
    \norm{Y_{0}^{(1)} - Y_{0}^{(2)}}_2^2
    \right]
    ,
\end{align}
where $\gamma$ influences the impact of this additional loss. Small values of $\gamma$ might result in elements of $\smash{Y_0^{(1)}}$ and $\smash{Y_0^{(2)}}$ never matching. A large $\gamma$ can result in degenerate distributions for $\smash{Y_0^{(1)}}$ and $\smash{Y_0^{(2)}}$. In both cases, the common channel is underutilized. Thus, this auxiliary loss can discourage the use of the common channel. We overcome this obstacle in practice by setting $\gamma = 1$ and reducing the cost of usage of the common channel $\beta$ when necessary, offering it as the only hyper-parameter.

Entropy models for each private channel, $h_1$ and $h_2$, propose parameters of a probability distribution assumed for its target representations, conditioned on the common representation. This conditioning is prescribed by the proposed objective function. Hence, these entropy models use as context the previously coded elements in $\smash{Y_{\{1,2\}}}$, in addition to $\smash{Y_0}$, to predict the distribution parameters of the current elements in $\smash{Y_{\{1,2\}}}$, respectively. An entropy model for the common channel, $h_0$, only uses as context the previously coded elements in $\smash{Y_0}$. As we have seen, it is often difficult to discard the information in the common channel from the private channels, such that $I(Y_1,Y_2;Y_0) = 0$. Hence, using the common representation to model the entropy of the private representations can handle the redundancies between the private and common channels and improve compression.

Each private representation is concatenated with the common representation and processed by a synthesis transform for its corresponding task. Each synthesis transform contains a task-specific model to produce reasonable outputs within sample spaces of the targets $Z_{\{1,2\}}$, respectively. Using only the private channel as input for the synthesis transform is another plausible architecture but it forces the common information to be present in the private channel. The conditional entropy models must then be able to predict this common information very accurately to avoid rate increases due to redundancies. This is often difficult for learnable codecs.

Other architectures for the analysis transform $f$ are evaluated in Section~\ref{sec:ablation}. Intuitively, this architecture provides flexible representations while reducing the learning complexity of making them compatible. A theoretical justification is presented in Appendix~\ref{appendix:compatible}, in which we introduce a measure of compatibility between representations based on the generalization error induced by the hypotheses (family of functions) used to generate them.

\section{Experimental evaluation}

In our experiments, the proposed architecture specializes to a single source $X$, so that $(X_1, X_2) = X$. We use an architecture inspired by \citet{HeYPMQW22} as analysis and synthesis transforms. It consists of 3 stacks of 3 ResNet blocks each, with the stacks interjecting 4 convolutional layers acting as dimensional bottlenecks. These convolutional layers scale their inputs. As such, the analysis transform progressively reduces the spatial dimensions by a total factor of 16, while increasing the number of channels to 24, 48, 192 and $E$. The synthesis transform performs the opposite in tandem, decreasing the number of channels and increasing the spatial size to that of the original input.

As a quantization function $q$, we use the straight-through gradient function proposed in \citet{TheisSCH17}. We use the relatively simple model of \citet{balle2018variational}. The common representation is processed and used in place of the hyper-prior to establish the conditional entropy models. As such, for each private channel, the common representation is processed by 2 stacks of 2 ResNet blocks each, interjected by a convolutional layer that increases the number of channels from $\nicefrac{E}{2}$ to $E$. Masked convolutions turn the entropy model auto-regressive, which is suitable for coding \citep{balle2018variational}.

See Appendix \ref{appendix:additional} for architecture diagrams, hyper-parameters, training settings, other results, and further discussions. Code is available at: \href{https://github.com/adeandrade/research}{github.com/adeandrade/research}

\subsection{Transmit-receive tradeoff and ablation study}
\label{sec:ablation}

\begin{figure}[t]
    \begin{subfigure}{0.495\linewidth}
        \includegraphics[width=\linewidth]{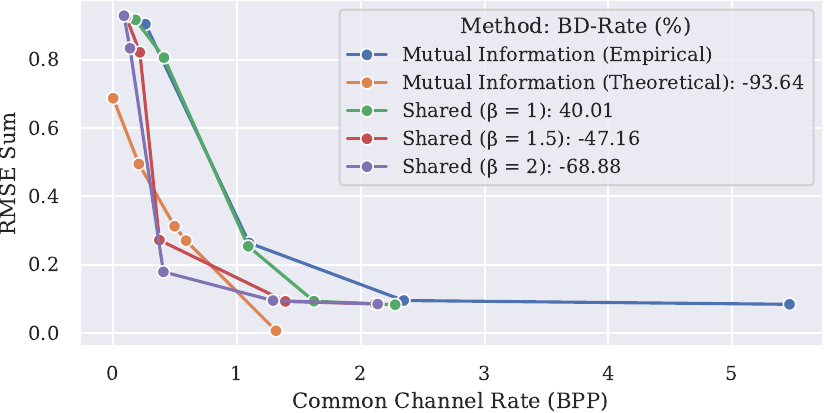}
        \caption{Common channel rate for $\beta \in \{ 1, \nicefrac{3}{2}, 2 \}$}
        \label{fig:discrete-all-ci}
    \end{subfigure}
    \begin{subfigure}{0.495\linewidth}
        \includegraphics[width=\linewidth]{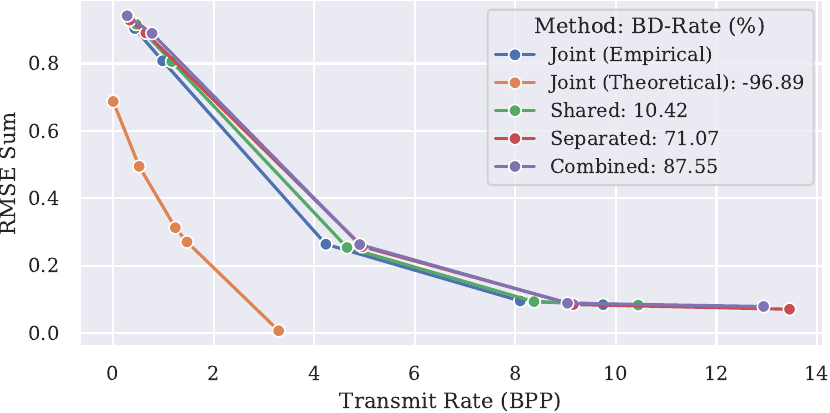}
        \caption{Transmit rate for $\beta = 1$}
        \label{fig:discrete-transmit}
    \end{subfigure}
    \begin{subfigure}{0.495\linewidth}
        \includegraphics[width=\linewidth]{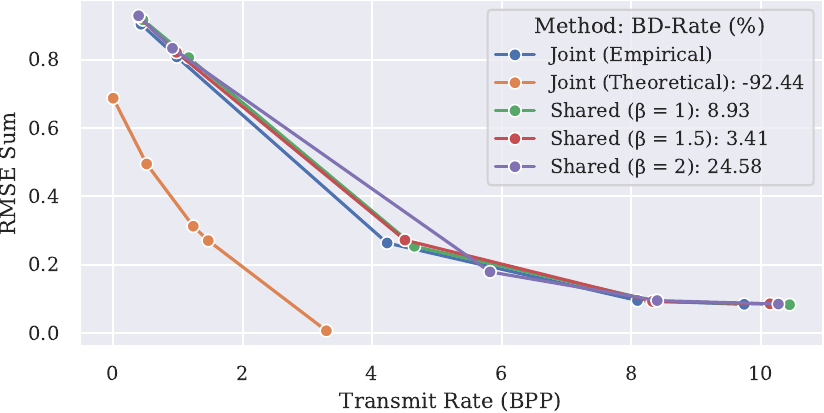}
        \caption{Transmit rate for $\beta \in \{ 1, \nicefrac{3}{2}, 2 \}$}
        \label{fig:discrete-all-transmit}
    \end{subfigure}
    \begin{subfigure}{0.495\linewidth}
        \includegraphics[width=\linewidth]{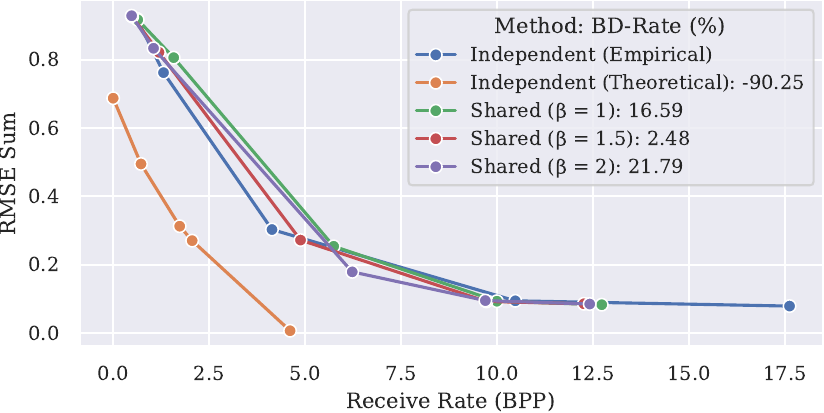}
        \caption{Receive rate for $\beta \in \{ 1, \nicefrac{3}{2}, 2 \}$}
        \label{fig:discrete-all-receive}
    \end{subfigure}
    \caption{Rate-distortion curves for different methods, optimized for transmit or receive rate, or a mixture of both. BD-rates \citep{bjontegaard2001calculation} are computed with respect to the method with no assigned score. Bits-per-pixel (BPP) is the bitrate scaled by $\nicefrac{1}{64 \times 64}$.}
\end{figure}

We developed a synthetic dataset to study the proposed method. Let $X_1$ and $X_2$ index the first dimension of a random variable $X$. We created an $X$ such that $H(X_1,X_2) =  3.3$ bits per element, $H(X_1) + H(X_2) = 4.62$, and consequently, $I(X_1;X_2) = 1.32$. $X_1$ and $X_2$ are individually transformed to generate targets $Z_1$ and $Z_2$ of two linear regression tasks. We train to minimize the RMSE loss between the predictions and the targets of each task.

In addition to the proposed \textit{Shared} architecture, we evaluate the rate-distortion performance of two additional encoder architectures. The \textit{Separated} architecture has an independent analysis transform for each channel. The \textit{Combined} architecture uses a single analysis transform and splits the output tensor into 3 parts corresponding to the GWN channels. If both tasks share a single channel, a resulting \textit{Joint} architecture optimizes the transmit rate. Using a private channel for each task without a common channel, results in an \textit{Independent} method which optimizes the receive rate. The rates produced by these methods are used to compute empirical estimates of the joint and marginal rate-distortion functions, and mutual information between tasks. We contrast them against theoretical values. See Appendix~\ref{appendix:additional} for details on the dataset, architectures, and measurement calculation.

Figure~\ref{fig:discrete-all-ci} shows that optimizing the transmit rate produces rates for the common channel that are higher than the empirical mutual information between sources. The same figure also shows that optimizing the receive rate produces rates for the common channel that are lower than the empirical mutual information. Codecs trained with $\beta = \nicefrac{3}{2}$ explore the transmit-receive tradeoff. They have a lower rate on the common channel than the codecs optimized for the transmit rate, but more information than those optimized for the receive rate.

For all $\beta$ explored, the Shared architecture outperforms the Separated and Combined alternatives, as shown in Figure~\ref{fig:discrete-transmit} for $\beta = 1$. See Appendix~\ref{appendix:additional} for results on the other $\beta$. Note that the empirical estimates of the rate are considerably higher than the theoretical values, as often seen in practice \citep{BajicBounds}. Figures \ref{fig:discrete-all-transmit} and \ref{fig:discrete-all-receive} show that $\beta = \nicefrac{3}{2}$ is a reasonable value for this problem, since it performs marginally better than $\beta = 1$ and $\beta = 2$, in both transmit and receive rates, respectively.

\subsection{Edge-case exploration with image classification}

\begin{figure}[t]
    \begin{subfigure}{0.495\linewidth}
        \includegraphics[width=\linewidth]{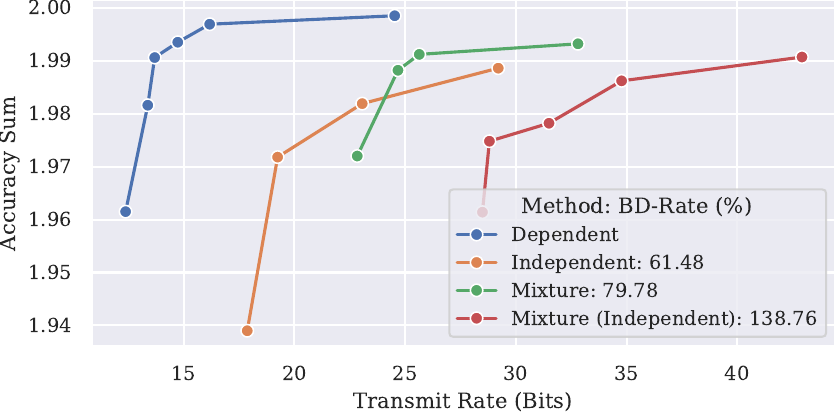}
        \caption{Transmit rate}
        \label{fig:mnist-transmit}
    \end{subfigure}
    \begin{subfigure}{0.495\linewidth}
        \includegraphics[width=\linewidth]{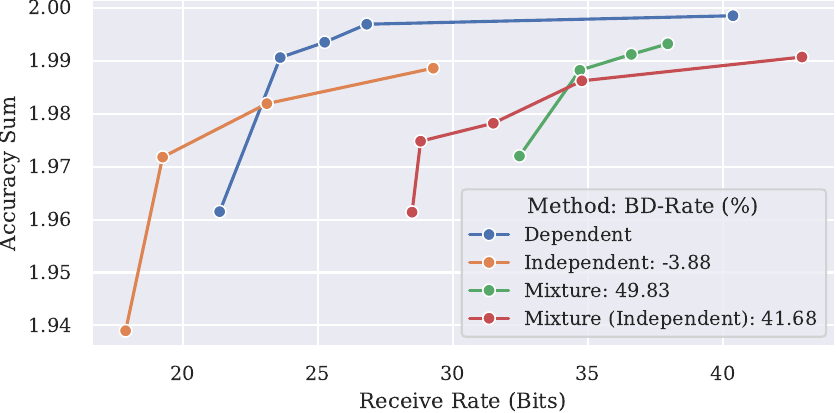}
        \caption{Receive rate}
        \label{fig:mnist-receive}
    \end{subfigure}
    \caption{Validation rate-accuracy curves for colored MNIST on 3 different PMFs. The top-1 classification accuracies are added and shown on the vertical axis. The total bitrate is shown as rate. BD-rates are computed with respect to the Dependent PMF.}
    \label{fig:mnist}
\end{figure}

Classification problems have well-defined theoretical measurements of mutual information against which we can compare. We randomly colorize digit images from MNIST \citep{lecun2010mnist} according to three different PMFs: \begin{enumerate*}
    \item A \textit{Dependent} PMF, where one color always corresponds to one digit;
    \item An \textit{Independent} PMF, in which for each sample, one out of 10 colors is sampled uniformly; and
    \item A \textit{Mixture} PMF, where each digit has a subset of 10 colors assigned to it, with uniform probability.
\end{enumerate*}

One of the tasks in our proposed method predicts the digit, while the other task predicts the color, using the colorized images and corresponding targets. The Dependent PMF has a joint entropy of $\log_2 10$ bits and a mutual information of the same amount. The Independent PMF has a joint entropy of $2 \log_2 10$ bits, and a mutual information of 0 bits. The Mixture PMF has a joint entropy of 5.12 bits, and a mutual information of 1.4 bits.

Figures \ref{fig:mnist-transmit} and \ref{fig:mnist-receive} show the transmit an receive rates, respectively, for the 3 PMFs. We operate within an order of magnitude of the theoretical bounds, which is comparable to other codecs \citep{BajicBounds}. More importantly, the model trained on the Dependent PMF produces a lower transmit rate since it places most of its information on the common channel, taking advantage of all information being common. On the other hand, the model trained on the Independent PMF produces the lowest receive rate, since it has a very low rate on the common channel, as with the underlying PMF.

The common information in the Mixture PMF is not very separable, which results in our method producing lower rate-distortion performance compared to the other PMFs. It still performs better, in terms of transmit rate, than the Independent approach from Section~\ref{sec:ablation}, where there is no common channel. Appendix~\ref{appendix:additional} shows the rate of each channel, produced by our method, for every PMF.

\subsection{Rate-distortion performance in computer vision tasks}

\begin{figure}[t]
    \begin{subfigure}{0.495\linewidth}
        \includegraphics[width=\linewidth]{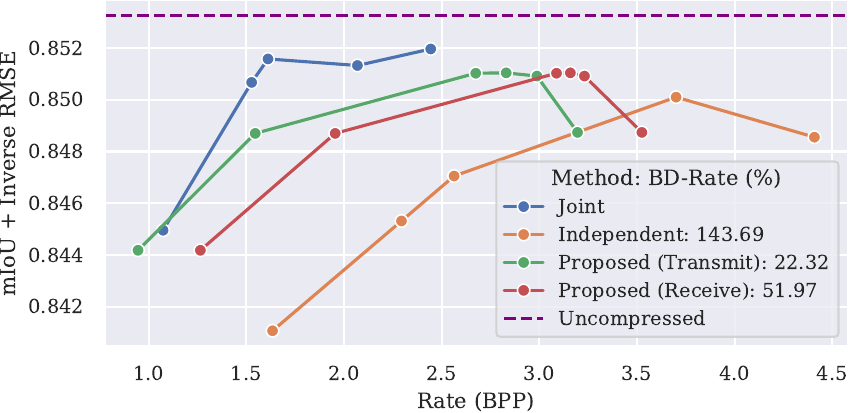}
        \caption{Semantic segmentation and depth estimation}
        \label{fig:cityscapes}
    \end{subfigure}
    \begin{subfigure}{0.495\linewidth}
        \includegraphics[width=\linewidth]{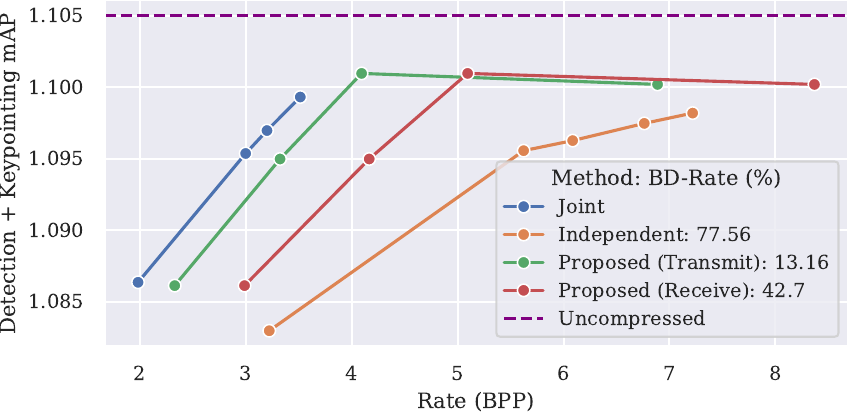}
        \caption{Object detection and keypoint detection}
        \label{fig:coco}
    \end{subfigure}
    \caption{Rate-accuracy curves of the proposed method against the Joint and Independent baselines. The tasks in (a) are reported for the validation set of Cityscapes. The tasks in (b) are reported for the validation set of COCO 2017. The transmit and receive rates of the proposed method are included. BD-rates are computed with respect to the Joint method. The task performances are added. The depth RMSE is scaled so its inverse is in a similar scale as the segmentation mean intersection over union (mIoU). The detection performances are measured by the mean average precision (mAP). The \textit{Uncompressed} lines correspond to the original performances of the pre-trained task models.}
    \label{fig:cv}
\end{figure}

We evaluate the proposed method on semantic segmentation and depth estimation for Cityscapes \citep{cordts2016cityscapes}, and on object detection and keypoint detection for COCO 2017 \citep{LinMBHPRDZ14}. As part of $g_1$ and $g_2$, we append pre-trained task-specific models to the synthesis transforms. We keep their weights fixed as we train the rest of the codec. We use DeepLabV3+ with MobileNet for semantic segmentation \citep{ChenZPSA18}, LRASPP with MobileNetV3 for depth estimation \citep{HowardPALSCWCTC19}, Faster R-CNN with ResNet50 for object detection \citep{RenHG017}, and Keypoint R-CNN with ResNet50 for keypoint detection \citep{HeGDG20}.

Figure~\ref{fig:cv} compares the performance of the proposed method. It is able to outperform an Independent approach and it is relatively close to the Joint approach. The curves for the receive rate are higher than the Independent approach, suggesting that the rate of the common channel is lower than the empirical mutual information between these tasks. Some curves in the Cityscapes experiments have an increase in distortion with the lowest compression, which is often informally attributed to the lack of regularization, provided by stronger rate constraints.

\section{Summary and conclusion}

We validated the ability of the proposed learnable Gray-Wyner Network to distill common information between tasks and compared it against other architectures. The performance is theoretically justified by analyzing the compatibility between intermediate representations. We provided bounds that relate two measures of lossy common information. The proposed optimization objective is derived from this theory and was able to empirically explore the transmit-receive rate tradeoff. The proposed method is also able to handle edge-cases, including a case where there is no mutual information between tasks, and another where the tasks are fully dependent. Finally, between the three computer vision experiments, our codecs achieved, on average, a BD-rate advantage of -81.58\% in transmit rate, against single-task codecs. 

Isolating and coding the common information between dependent tasks allows for the efficient distributed inference of machine tasks. Generating representations that explore the tradeoff between the transmit and receive rates in the Gray-Wyner Network has additional practical implications in storage and selective retrieval, and dispersive information routing \citep{ViswanathaAR11a}. Knowing the information requirements of learned representations can assist in planning for the resources allocated to a neural network, its dimensionality, and quantization levels.

Extensions to three or more tasks are possible, but since the total number of channels scales exponentially, a more dynamic architecture might be required. Nevertheless, the theoretical contributions of this work should prove useful in deriving new methods.

\subsubsection*{Acknowledgments}
This work was partially funded by Intel Labs and the Natural Sciences and Engineering Research Council of Canada (NSERC).

\bibliography{iclr2026_conference}
\bibliographystyle{iclr2026_conference}

\appendix

\section{Bounds for lossy common information}
\label{appendix:ci}

\subsection{Additional preliminaries}
\label{appendix:preliminaries}
Recall that the concept of mutual information measures the mutual dependence between two random variables. For two discrete distributions, it can be given as:
\begin{align}
    I(X_1;X_2)
    =
    \sum_{x_1 \in \mathcal{X}_1}
    \sum_{x_2 \in \mathcal{X}_2}
    P_{X_1,X_2}(x_1,x_2)
    \log
    \frac{P_{X_1,X_2}(x_1,x_2)}{P_{X_1}(x_1)P_{X_2}(x_2)}
    .
\end{align}
The concept also applies to continuous random variables, where a summation can be replaced with an integral over the corresponding sample space. The variables $X_1$ or $X_2$ can be extended to joint probability functions of two or more variables, supporting concepts such as:
\begin{align}
    &I(X_{1:n};Y_{1:n})
    =
    \sum_{x_{1:n} \in \mathcal{X}_{1:n}}
    \sum_{y_{1:n} \in \mathcal{Y}_{1:n}}
    P_{X_{1:n},Y_{1:n}}(x_{1:n},y_{1:n})
    \log
    \frac{
    P_{X_{1:n},Y_{1:n}}(x_{1:n},y_{1:n})
    }{
    P_{X_{1:n}}(x_{1:n})
    P_{Y_{1:n}}(y_{1:n})
    }
    ,
\end{align}
where $X_{1:n}=X_1,...,X_n$ and $Y_{1:n} = Y_1,...,Y_n$ are all random variables. The conditional mutual information measure is the expected value of the mutual information of two random variables given the value of a third. It can be given as:
\begin{align}
     I(X;Y|Z)
     =
     \sum_{z\in {\mathcal{Z}}}
     \sum_{y\in {\mathcal{Y}}}
     \sum_{x\in {\mathcal{X}}}
     P_{X,Y,Z}(x,y,z)
     \log
     \frac{
     P_{X,Y|Z}(x,y|z)
     }{
     P_{X|Z}(x|z)
     P_{Y|Z}(y|z)
     }
     ,
\end{align}
where $Z$ is any discrete random variable. Similarly to mutual information, conditional mutual information is non-negative and can be extended to continuous random variables and joint probability distributions of two of more random variables.

The joint rate-distortion is given as:
\begin{align}
    R_{X_1,X_2}(D_1,D_2)
    =
    \min
    I(X_1,X_2;\hat{Z}_1,\hat{Z}_2)
    ,
\end{align}
where the minimum is taken with respect to all tuples $\smash{(\hat{Z}_1,\hat{Z}_2)}$ that achieve at least $(D_1,D_2)$ distortion. The marginal rate-distortion functions are given by:
\begin{align}
    R_{X_1}(D_1)
    =
    \min
    I(X_1;\hat{Z}_1)
    ,
    &&
    R_{X_2}(D_2)
    =
    \min
    I(X_2;\hat{Z}_2)
    ,
\end{align}
with similar distortion restrictions on $\smash{\hat{Z}}_1$ and $\smash{\hat{Z}}_2$. The conditional rate-distortion function is:
\begin{align}
    R_{X_1|X_2}(D_1)
    =
    \min
    I(X_1;\hat{Z}_1 | X_2)
    ,
    &&
    R_{X_2|X_1}(D_2)
    =
    \min
    I(X_2;\hat{Z}_2 | X_1)
    ,
\end{align}
with the corresponding distortion restrictions on the minimums.

Recall that the \textit{interaction information (II)} \citep{Ting1962OnTA,Yeung1991ANO} between random variables $X$, $Y$, and $Z$ is given as:
\begin{align}
    \label{eq:ii}
    I(X;Y;Z) = I(X;Y)-I(X;Y | Z)
    ,
\end{align}
and that the chain rule of mutual information \citep{Cover} is defined as:
\begin{align}
    \label{eq:chain-rule}
    I(X;Z) = I(X;Y,Z) - I(X;Y|Z)
    .
\end{align}
We derive the following corollary:
\begin{align}
    \label{eq:ii-break-down}
    I(X;Y;Z)
    =
    I(X,Y; Z)
    -
    I(X;Z|Y)
    -
    I(Y;Z|X)
    .
\end{align}
An outline of the proof is presented as follows:
\begin{align}
    I(X;Y;Z)
    &=
    \nicefrac{1}{2}
    \left[
    I(X;Z)
    -
    I(X;Z|Y)
    +
    I(Y;Z)
    -
    I(Y;Z|X)
    \right]
    &
    (\text{Eq.~\ref{eq:ii} twice})
    \\
    &=
    I(X,Y; Z)
    -
    I(X;Z|Y)
    -
    I(Y;Z|X)
    .
    &
    (\text{Eq.~\ref{eq:chain-rule} twice})
\end{align}

The chain rule of mutual information and the definition of interaction information can be extended to any finite amount of random variables. Interaction information can be negative when the variables involved are conditional dependent.

\subsection{Supporting statements}
\begin{lemma}
\label{lemma:1}
Let $\smash{\mathcal{\hat{Z}}^{(t)}_{D_1, D_2}}$ be the set of tuples $\smash{(\hat{Z}_1, \hat{Z}_2)}$ that achieve $\smash{R_{X_1,X_2}(D_1, D_2)}$, and $\smash{\mathcal{\hat{Z}}^{(r)}_{D_1, D_2}}$ be the set of tuples $\smash{(\hat{Z}_1, \hat{Z}_2)}$ such that $\smash{\hat{Z}_1}$ achieves $R_{X_1}(D_1)$, and $\smash{\hat{Z}_2}$ achieves $R_{X_2}(D_2)$. We have that:
\begin{align}
    \max_{(\hat{Z}_1, \hat{Z}_2) \in \mathcal{\hat{Z}}^{(r)}_{D_1, D_2}}
    I
    (
    X_1, X_2
    ;
    \hat{Z}_1
    ;
    \hat{Z}_2
    )
    \leq
    \min_{(\hat{Z}_1, \hat{Z}_2) \in \mathcal{\hat{Z}}^{(t)}_{D_1, D_2}}
    I (
    X_1, X_2
    ;
    \hat{Z}_1
    ;
    \hat{Z}_2
    )
    .
\end{align}

\begin{proof}
By definition, we have:
\begin{align}
    I
    \left(
    X_1
    ;
    \hat{Z}_1^{(r,1)}
    \right)
    +
    I
    \left(
    X_2
    ;
    \hat{Z}_2^{(r,1)}
    \right)
    &=
    I
    \left(
    X_1
    ;
    \hat{Z}_1^{(r,2)}
    \right)
    +
    I
    \left(
    X_2
    ;
    \hat{Z}_2^{(r,2)}
    \right)
    \\
    &\forall
    \,
    \left( \hat{Z}_1^{(r,1)}, \hat{Z}_2^{(r,1)} \right)
    ,
    \left( \hat{Z}_1^{(r,2)}, \hat{Z}_2^{(r,2)} \right)
    \in
    \mathcal{\hat{Z}}^{(r)}_{D_1, D_2}
    .
\end{align}
The minimization objective for $\smash{R_{X_1}(D_1) + R_{X_2}}(D_2)$, over all possible tuples $\smash{(\hat{Z}_1, \hat{Z}_2)}$, is given as:
\begin{align}
    I ( X_1; \hat{Z}_1 )
    +
    I ( X_2; \hat{Z}_2 )
    .
\end{align}
This objective selects $\hat{Z}_1$ and $\hat{Z}_2$ independently and does not consider their mutual information. Thus, solutions with lower or higher mutual information are not penalized and are present in $\smash{\mathcal{\hat{Z}}^{(r)}_{D_1, D_2}}$, as long as they meet the objective. Moreover, the objective does not allow $\smash{\hat{Z}_1}$ or $\smash{\hat{Z}_2}$ to have excess information in $\smash{I (X_1; \hat{Z}_1 )}$ or $\smash{I ( X_2; \hat{Z}_2 )}$ to achieve $D_1$ and $D_2$, respectively, since a different tuple would exist that discards this information, setting a new goal for the accepted solutions in $\smash{\mathcal{\hat{Z}}^{(r)}_{D_1, D_2}}$.

To simplify notation, let $X = (X_1, X_2)$. By definition, we have:
\begin{align}
    I
    \left(
    X;\hat{Z}_1^{(t,1)}, \hat{Z}_2^{(t,1)}
    \right)
    =\;&
    I
    \left(
    X;\hat{Z}_1^{(t,2)}, \hat{Z}_2^{(t,2)}
    \right)
    \,
    \forall
    \,
    \left( \hat{Z}_1^{(t,1)}, \hat{Z}_2^{(t,1)} \right)
    ,
    \left( \hat{Z}_1^{(t,2)}, \hat{Z}_2^{(t,2)} \right)
    \in
    \mathcal{\hat{Z}}^{(t)}_{D_1, D_2}
\end{align}
The minimization objective in $\smash{R_{X_1,X_2}}(D_1,D_2)$, over all possible tuples $\smash{(\hat{Z}_1, \hat{Z}_2)}$, is broken down as:
\begin{align}
    I
    &
    (
    X
    ;
    \hat{Z}
    )
    \\
    &=
    I(X ; \hat{Z}_1)
    +
    I(X ; \hat{Z}_2 | \hat{Z}_1)
    &
    (\text{Eq.~\ref{eq:chain-rule}})
    \\
    &=
    \underbrace{
    I
    (
    X
    ;
    \hat{Z}_1
    )
    }_{\text{(a)}}
    +
    \underbrace{
    I
    (
    X
    ;
    \hat{Z}_2
    )
    }_{\text{(b)}}
    -
    \underbrace{
    I
    (
    X
    ;
    \hat{Z}_1
    ;
    \hat{Z}_2
    )
    }_{\text{(c)}}
    &
    (\text{Eq.~\ref{eq:ii}})
    \\
    \label{eq:joint-rd-broken}
    &=
    \underbrace{
    I
    (
    X_1
    ;
    \hat{Z}_1
    )
    }_{\text{(a.1)}}
    +
    \underbrace{
    I
    (
    X_2
    ;
    \hat{Z}_1
    |
    X_1
    )
    }_{\text{(a.2)}}
    +
    \underbrace{
    I
    (
    X_2
    ;
    \hat{Z}_2
    )
    }_{\text{(b.1)}}
    +
    \underbrace{
    I
    (
    X_1
    ;
    \hat{Z}_2
    |
    X_2
    )
    }_{\text{(b.2)}}
    -
    \underbrace{
    I
    (
    X
    ;
    \hat{Z}_1
    ;
    \hat{Z}_2
    )
    }_{\text{(c)}}
    .
    &
    (\text{Eq.~\ref{eq:chain-rule}})
\end{align}
Remarks:
\begin{enumerate}
    \item Because of term (c), this objective considers the common information between variables, at the expense of allowing tuples to have private information from the other variable in them.
    
    \item Information in (a.1) and (b.1) contributes to achieving $D_1$ and $D_2$, respectively.
    
    \item Information in (a.2) and (b.2) does not contribute to achieving $D_1$ or $D_2$, respectively.
    
    \item Interaction information $\smash{I(X; \hat{Z}_1; \hat{Z}_2)}$ is present in the same proportion in (a) and (b), each.
    
    \item If some interaction information $\smash{I(X; \hat{Z}_1; \hat{Z}_2)}$ is not in (a.1), it is in (a.2). If it is not in (b.1), it must be in (b.2).
    
    \item Interaction information $\smash{I(X; \hat{Z}_1; \hat{Z}_2)}$ is distributed among (a.1) and (b.1), (a.1) and (b.2), and (a.2) and (b.1).
    
    \item Interaction information $\smash{I(X; \hat{Z}_1; \hat{Z}_2)}$ present in (a.2) and (b.2) has no purpose and the minimization objective would select tuples where it does not exist, since the penalty is double and is not sufficiently countered by (c), increasing the objective.
    
    \item Since information in $\smash{\hat{Z}_1}$ of type (a.2) or information in $\smash{\hat{Z}_2}$ of type (b.2) does not contribute to achieving $D_1$ or $D_2$, respectively, the minimization objective would select tuples that have zero information of those types, \textit{unless} it is also present in the other variable ($\smash{\hat{Z_2}}$ or $\smash{\hat{Z_1}}$), so that it is countered by (c) without affecting the objective.
    
    \item Interaction information $\smash{I(X; \hat{Z}_1; \hat{Z}_2)}$ that reduces the private information in (a.1) and (b.1), takes precedence over other information when it decreases the objective.
\end{enumerate}

Since all elements in $\smash{\mathcal{\hat{Z}}^{(r)}_{D_1, D_2}}$ have no type (a.2) or (b.2) information and all have the same amount of type (a.1) and (b.1) information, only the term (c) differentiates them under the $R_{X_1,X_2}$ objective. Hence, the tuples with the most common information will have the lowest objective: 
\begin{gather}
    I
    \left(
    X
    ;
    \hat{Z}^{(r_\mathrm{max})}
    \right)
    \leq
    I
    \left(
    X
    ;
    \hat{Z}_1^{(r_\mathrm{min})}
    ,
    \hat{Z}_2^{(r_\mathrm{min})}
    \right)
    ,
\end{gather}
where $\smash{\hat{Z}^{(r_\mathrm{max})}}$ and $\smash{\hat{Z}^{(r_\mathrm{min})}}$ are tuples in $\smash{\mathcal{\hat{Z}}^{(r)}_{D_1, D_2}}$ with the highest and lowest $\smash{I (X; \hat{Z}_1 ; \hat{Z}_2)}$, respectively.

Let $\smash{\hat{Z}^{(t_\mathrm{min})}}$ be the tuple in $\smash{\mathcal{\hat{Z}}^{(t)}_{D_1, D_2}}$ with the lowest $\smash{I (X; \hat{Z}_1 ; \hat{Z}_2)}$. This tuple has no information of type (a.2) or (b.2), since it does not contribute to reducing the objective. Moreover, this property, in conjunction with the conditions in \ref{eq:conditions-targets}, establishes that $\smash{\hat{Z}_1^{(t_\mathrm{min})}}$ and $\smash{\hat{Z}_2^{(t_\mathrm{min})}}$ can be represented as functions of $X_1$ and $X_2$, respectively. Since these functions are applied independently to different variables, the resulting variables cannot be conditionally dependent on $X$. Thus, the interaction information involving these variables is non-negative. Using these properties, we have:
\begin{align}
    &I
    (
    X
    ;
    \hat{Z}
    )
    \\
    &=
    I
    (
    X_1 ; \hat{Z}_1
    )
    +
    I
    (
    X_2 ; \hat{Z}_1
    |
    X_1
    )
    +
    I
    (
    X_2; \hat{Z}_2
    )
    +
    I
    (
    X_1; \hat{Z}_2
    |
    X_2
    )
    -
    I
    (
    X; \hat{Z}_1; \hat{Z}_2
    )
    &
    (\text{Eq.~\ref{eq:joint-rd-broken}})
    \\
    &=
    I
    \left(
    X_1
    ;
    \hat{Z}_1^{(t_\mathrm{min})}
    \right)
    +
    I
    \left(
    X_2
    ;
    \hat{Z}_2^{(t_\mathrm{min})}
    \right)
    -
    I
    \left(
    X
    ;
    \hat{Z}_1^{(t_\mathrm{min})}
    ;
    \hat{Z}_2^{(t_\mathrm{min})}
    \right)
    &
    (\text{no a.2/b.2})
    \\
    &\leq
    I
    \left(
    X_1
    ;
    \hat{Z}_1^{(t_\mathrm{min})}
    \right)
    +
    I
    \left(
    X_2
    ;
    \hat{Z}_2^{(t_\mathrm{min})}
    \right)
    .
    &
    (\text{II} \geq 0)
\end{align}
Thus, tuples that result from optimizing $\smash{R_{X_1,X_2}(D_1,D_2)}$ can produce lower values than tuples resulting from optimizing $\smash{R_{X_1}(D_1) + R_{X_2}(D_2)}$, such that:
\begin{align}
    I
    \left(
    X
    ;
    \hat{Z}^{(t_\mathrm{min})}
    \right)
    \leq
    I
    \left(
    X
    ;
    \hat{Z}^{(r_\mathrm{max})}
    \right)
    .
\end{align}
A minor increase in private information at the expense of greater common information in (a.1) and (b.1) can cause the objective to decrease further than any tuple optimized for $\smash{R_{X_1}(D_1) + R_{X_2}(D_2)}$. This result conforms to the one shown by \citet{Gray73}, stating that:
\begin{align}
    R_{X_1,X_2}(D_1, D_2)
    \leq
    R_{X_1}(D_1) + R_{X_2}(D_2)
    .
\end{align}
\end{proof}
\end{lemma}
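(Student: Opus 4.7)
The plan is to sandwich both sides of the inequality around the single quantity $\Delta := R_{X_1}(D_1) + R_{X_2}(D_2) - R_{X_1,X_2}(D_1,D_2)$. The workhorse is the symmetric identity
\begin{align*}
I(X_1,X_2;\hat{Z}_1;\hat{Z}_2) = I(X_1,X_2;\hat{Z}_1) + I(X_1,X_2;\hat{Z}_2) - I(X_1,X_2;\hat{Z}_1,\hat{Z}_2),
\end{align*}
which is an immediate consequence of Eq.~\ref{eq:ii} combined with the chain rule. Applying this identity on both the receive and the transmit side reduces the lemma to comparing marginal and joint mutual-information quantities against the corresponding rate-distortion functions.

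For the upper bound I would fix an arbitrary $(\hat{Z}_1^{(r)}, \hat{Z}_2^{(r)})\in\mathcal{\hat{Z}}^{(r)}_{D_1,D_2}$. Since ``$\hat{Z}_i^{(r)}$ achieves $R_{X_i}(D_i)$'' means it is produced by a test channel using $X_i$ alone, the Markov chain $X_{3-i}\leftrightarrow X_i\leftrightarrow \hat{Z}_i^{(r)}$ gives $I(X_1,X_2;\hat{Z}_i^{(r)}) = I(X_i;\hat{Z}_i^{(r)}) = R_{X_i}(D_i)$. Moreover, because the pair meets distortions $(D_1, D_2)$, it is a feasible test channel for the joint rate-distortion problem, so $I(X_1,X_2;\hat{Z}_1^{(r)},\hat{Z}_2^{(r)}) \geq R_{X_1,X_2}(D_1,D_2)$. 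Substituting these into the identity yields $I(X_1,X_2;\hat{Z}_1^{(r)};\hat{Z}_2^{(r)}) \leq \Delta$, and hence bounds the maximum over the receive set.

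For the matching lower bound I would fix an arbitrary $(\hat{Z}_1^{(t)}, \hat{Z}_2^{(t)})\in\mathcal{\hat{Z}}^{(t)}_{D_1,D_2}$. By definition $I(X_1,X_2;\hat{Z}_1^{(t)},\hat{Z}_2^{(t)}) = R_{X_1,X_2}(D_1,D_2)$. The key sub-step is that the induced marginal channel $P(\hat{Z}_i^{(t)}\mid X_i)$, obtained by marginalizing out $X_{3-i}$ given $X_i$, produces exactly the same joint distribution on $(\hat{Z}_i^{(t)}, Z_i)$ as the full encoder---an equality that relies precisely on the Markov condition $Z_i\leftrightarrow X_i\leftrightarrow X_{3-i}$ from Eq.~\ref{eq:conditions-targets}. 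Hence the induced channel is feasible for $R_{X_i}(D_i)$, giving $I(X_i;\hat{Z}_i^{(t)}) \geq R_{X_i}(D_i)$, and by the chain rule $I(X_1,X_2;\hat{Z}_i^{(t)}) \geq I(X_i;\hat{Z}_i^{(t)}) \geq R_{X_i}(D_i)$. Plugging back into the identity gives $I(X_1,X_2;\hat{Z}_1^{(t)};\hat{Z}_2^{(t)}) \geq \Delta$, bounding the minimum over the transmit set from below; combining the two bounds through $\Delta$ closes the argument.

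The step I expect to require the most care is justifying that the marginal test channels induced from a joint encoder remain feasible for the individual rate-distortion problems; this is where the Markov assumption of Eq.~\ref{eq:conditions-targets} is indispensable, since without it the marginal distortion computed through $P(\hat{Z}_i^{(t)}\mid X_i)$ could differ from the full-encoder distortion, breaking the inequality $I(X_i;\hat{Z}_i^{(t)}) \geq R_{X_i}(D_i)$. A secondary subtlety is pinning down the convention that ``$\hat{Z}_i^{(r)}$ achieves $R_{X_i}(D_i)$'' entails a test channel depending on $X_i$ alone, which is what permits the clean identification $I(X_1,X_2;\hat{Z}_i^{(r)}) = R_{X_i}(D_i)$ in the receive-side step; relaxing this convention would force a more intricate accounting of the cross terms $I(X_{3-i};\hat{Z}_i^{(r)}\mid X_i)$ that appear in the paper's alternative decomposition (Eq.~\ref{eq:joint-rd-broken}).
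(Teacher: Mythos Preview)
Your argument is correct and is considerably cleaner than the paper's own proof. The paper works from the five-term decomposition of $I(X;\hat Z_1,\hat Z_2)$ in Eq.~\ref{eq:joint-rd-broken} and then reasons, through a sequence of informal ``remarks'' about what an optimal encoder would or would not retain in the terms (a.2), (b.2), and (c), to conclude that the transmit-optimal tuple with the least interaction information still dominates the receive-optimal tuple with the most. That route never isolates a single pivot quantity and leans on heuristic optimizer behaviour rather than hard inequalities.

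Your approach instead fixes the pivot $\Delta = R_{X_1}(D_1)+R_{X_2}(D_2)-R_{X_1,X_2}(D_1,D_2)$ and uses the symmetric identity $I(X;\hat Z_1;\hat Z_2)=I(X;\hat Z_1)+I(X;\hat Z_2)-I(X;\hat Z_1,\hat Z_2)$ to turn the lemma into two clean feasibility comparisons: on the receive side, a receive-optimal pair is feasible for the joint problem, so its joint mutual information is at least $R_{X_1,X_2}(D_1,D_2)$; on the transmit side, the induced marginal channel $P(\hat Z_i^{(t)}\mid X_i)$ is feasible for the $i$-th marginal problem (this is exactly where you invoke Eq.~\ref{eq:conditions-targets}), so $I(X_i;\hat Z_i^{(t)})\ge R_{X_i}(D_i)$. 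Both subtleties you flag---the Markov assumption needed to preserve the marginal distortion, and the convention that a marginal-optimal $\hat Z_i^{(r)}$ comes from a channel in $X_i$ alone---are precisely the points the paper handles only implicitly through its remarks about the vanishing of terms (a.2) and (b.2). Your route is shorter, fully rigorous, and makes the role of the Markov conditions transparent; the paper's route, while arriving at the same place, offers more interpretive intuition about \emph{why} joint optima may carry extra interaction information.
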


\subsection{Main theorem}
\lcib*
\begin{proof}
We reinstate the conditions for Wyner's common information individually:
\begin{gather}
    \label{wci:cond-1}
    \hat{Z}_1 \leftrightarrow U \leftrightarrow \hat{Z}_2,
    \\
    \label{wci:cond-2}
    (X_1, X_2) \leftrightarrow (\hat{Z}_1,\hat{Z}_2) \leftrightarrow U
    ,
\end{gather}
and do the same for Gács-Körner common information:
\begin{gather}
    \label{eq:gkci-markov-1}
    X_2 \leftrightarrow X_1  \leftrightarrow V,
    \\
    \label{eq:gkci-markov-2}
    X_1\leftrightarrow X_2 \leftrightarrow V,
    \\
    \label{eq:gkci-markov-3}
    X_1 \leftrightarrow \hat{Z}_1 \leftrightarrow V,
    \\
    \label{eq:gkci-markov-4}
    X_2 \leftrightarrow \hat{Z}_2 \leftrightarrow V
    .
\end{gather}

To simplify notation, let $\smash{X = (X_1, X_2)}$, and $\smash{\hat{Z} = (\hat{Z}_1, \hat{Z}_2)}$. We show for the RHS that:
\begin{align}
    I(X; \hat{Z}_1; \hat{Z}_2)
    &=
    I(X; \hat{Z}_1;\hat{Z}_2;U)
    +
    I(X; \hat{Z}_1;\hat{Z}_2|U)
    &
    (\text{\text{Eq.~\ref{eq:ii}}})
    \\
    &=
    I(X; \hat{Z}_1;\hat{Z}_2;U)
    &
    (\text{Cond.~\ref{wci:cond-1}})
    \\
    \label{eq:inequality}
    &=
    I(X; \hat{Z};U)
    -
    I(X; \hat{Z}_1;U|\hat{Z}_2)
    -
    I(X; \hat{Z}_2;U|\hat{Z}_1)
    &
    (\text{Eq.~\ref{eq:ii-break-down}})
    \\
    \label{eq:wci-ineq}
    &\leq
    I(X;\hat{Z};U)
    &
    \left(
    I(X;W;Y|Z) \geq 0
    \right)
    \\
    &
    =
    I(X;U)
    -
    I(X;U|\hat{Z})
    &
    (\text{Eq.~\ref{eq:ii}})
    \\
    &
    =
    I(X;U)
    .
    &
    (\text{Cond.~\ref{wci:cond-2}})
\end{align}
Thus:
\begin{align}
    I(X;\hat{Z}_1; \hat{Z}_2)
    \leq
    I(X;U)
    &
    \implies
    \inf I(X; \hat{Z}_1; \hat{Z}_2)
    \leq
    \inf I(X;U)
    \\
    &
    \implies
    \min I
    (
    X;
    \hat{Z}_1
    ;
    \hat{Z}_2
    )
    \leq
    C(X_1,X_2; D_1,D_2),
    &
    (\text{Lemma~\ref{lemma:1}})
\end{align}
where the infimum is over the same set as in Eq.~\ref{eq:wci}, and the minimum is over the same set as in Lemma~\ref{lemma:1}.

To prove the LHS, first we derive the following:
\begin{align}
    \label{eq:theorem-lhs-1}
    &
    I(X; \hat{Z}_1; V | \hat{Z}_2)
    =
    I(X_1; \hat{Z}_1; V | \hat{Z}_2 , X_2)
    +
    I(X_2; \hat{Z}_1; V | \hat{Z}_2)
    =
    0
    ,
    &
    (\text{Eq.~\ref{eq:chain-rule}, \ref{eq:gkci-markov-2}, \ref{eq:gkci-markov-4}})
    \\
    \label{eq:theorem-lhs-2}
    &
    I(X; \hat{Z}_2; V | \hat{Z}_1)
    =
    I(X_2; \hat{Z}_2; V | \hat{Z}_1 , X_1)
    +
    I(X_1; \hat{Z}_2; V | \hat{Z}_1)
    =
    0
    ,
    &
    (\text{Eq.~\ref{eq:chain-rule}, \ref{eq:gkci-markov-1}, \ref{eq:gkci-markov-3}})
    \\
    \label{eq:theorem-lhs-3}
    &
    I(X; V | \hat{Z})
    =
    I(X_2; V | \hat{Z})
    +
    I(X_1; V | \hat{Z}, X_2)
    =
    0
    .
    &
    (\text{Eq.~\ref{eq:chain-rule}, \ref{eq:gkci-markov-4}, \ref{eq:gkci-markov-2}})
\end{align}
Now, we have:
\begin{align}
    I(X; \hat{Z}_1; \hat{Z}_2)
    &=
    I(X; \hat{Z}_1; \hat{Z}_2; V)
    +
    I(X; \hat{Z}_1; \hat{Z}_2 | V)
    &
    (\text{Eq.~\ref{eq:ii}})
    \\
    \label{eq:gkci-ineq}
    &
    \geq
    I(X; \hat{Z}_1; \hat{Z}_2; V)
    &
    \left(
    I(X;W;Y|Z) \geq 0
    \right)
    \\
    &=
    I(X; \hat{Z}; V)
    -
    I(X; \hat{Z}_1; V | \hat{Z}_2)
    -
    I(X; \hat{Z}_2; V | \hat{Z}_1)
    &
    (\text{Eq.~\ref{eq:ii-break-down}})
    \\
    &=
    I(X; \hat{Z}; V)
    &
    (\text{Eq.~\ref{eq:theorem-lhs-1},\ref{eq:theorem-lhs-2}})
    \\
    &=
    I(X;V)
    -
    I(X;V|\hat{Z})
    &
    (\text{Eq.~\ref{eq:ii}})
    \\
    &=
    I(X;V)
    .
    &
    (\text{Eq.~\ref{eq:theorem-lhs-3}})
\end{align}
Finally, we have:
\begin{align}
    I(X;V)
    \leq
    I(X;\hat{Z}_1;\hat{Z}_2)
    &
    \implies
    \sup I(X;V)
    \leq
    \sup I(X;\hat{Z}_1;\hat{Z}_2)
    \\
    &
    \implies
    K(X_1,X_2; D_1,D_2)
    \leq
    \max
    I
    (
    X
    ;
    \hat{Z}_1
    ;
    \hat{Z}_2
    )
    ,
    &
    \text{(Lemma~\ref{lemma:1})}
\end{align}
where the supremum is over the same set as in Eq.~\ref{eq:gkci}, and the maximum is over the same set as in Lemma~\ref{lemma:1}.

To obtain equality on the RHS, Equations~\ref{eq:theorem-lhs-1} and \ref{eq:theorem-lhs-2} shows us under which conditions the last two terms in Equation~\ref{eq:inequality} are zero. Since the conditions hold in equality for $\smash{\hat{Z}^*}$ and $W$, we continue the same analysis with the corresponding terms set to zero, yielding:
\begin{align}
    I(X; \hat{Z}_1; \hat{Z}_2; W)
    =
    I(X; \hat{Z}; W)
    &\implies
    I(X; \hat{Z}_1; \hat{Z}_2) = I(X;W)
    &
    \text{(Cds.~\ref{eq:gkci-markov-1}-\ref{eq:gkci-markov-4})}
    \\
    &\implies
    \inf I(X; \hat{Z}_1; \hat{Z}_2) = \inf I(X;W)
    \\
    &\implies
    I(X; \hat{Z}_1^*; \hat{Z}_2^*) = C(X_1,X_2; D_1,D_2)
    ,
    &
    \text{(Lemma~\ref{lemma:1})}
\end{align}
where the infimum is over the same conditions as in Eq.~\ref{eq:wci}.

For the LHS, due to Condition \ref{wci:cond-1} holding in equality for $\smash{\hat{Z}^*}$ and $W$, we have equality in Equation \ref{eq:gkci-ineq}. Hence:
\begin{align}
    I(X; \hat{Z}_1;\hat{Z}_2)
    =
    I(X; \hat{Z}_1;\hat{Z}_2;W)
    &\implies 
    I(X;W)
    =
    I(X;\hat{Z}_1;\hat{Z}_2)
    &
    (\text{Cond.~\ref{wci:cond-1}})
    \\
    &\implies
    \sup I(X;W)
    =
    \sup I(X;\hat{Z}_1;\hat{Z}_2)
    \\
    \label{eq:eq-lemma-1}
    &\implies
    K(X_1,X_2; D_1,D_2)
    =
    I(X; \hat{Z}^*_1;\hat{Z}^*_2)
    ,
    &
    \text{(Lemma~\ref{lemma:1})}
\end{align}
where the supremum is over the same conditions as in Eq.~\ref{eq:gkci}.

If either of Conditions \ref{eq:gkci-markov-1}-\ref{eq:gkci-markov-4} is not met, more than $I(X; \hat{Z}_1^*;\hat{Z}_1^*)$ is present in $W$ and equality is not achieved in Equation \ref{eq:wci-ineq}. If Condition \ref{wci:cond-1} is not met, not all mutual information between $\smash{\hat{Z}_1^*}$ and $\smash{\hat{Z}_2^*}$, in common with $(X_1, X_2)$, is placed on $W$ and equality is not achieved in Equation \ref{eq:gkci-ineq}. If all these conditions are met, but we have that:
\begin{align}
    \max_{(\hat{Z}_1, \hat{Z}_2) \in \mathcal{\hat{Z}}^{(r)}_{D_1, D_2}}
    I(X_1, X_2; \hat{Z}_1; \hat{Z}_2)
    \leq
    \min_{(\hat{Z}_1, \hat{Z}_2) \in \mathcal{\hat{Z}}^{(t)}_{D_1, D_2}}
    I(X_1, X_2; \hat{Z}_1; \hat{Z}_2)
    ,
\end{align}
then the optimization of the joint rate-distortion and the marginal rate-distortion objectives produce tuples with different amounts of mutual information, and the lossy common information measures will not match.
\end{proof}

\section{Derivation of the optimization objective}
\label{appendix:loss}

\loss*
\begin{proof}
Since $Y_0$ is a deterministic function of $(X_1, X_2)$, we have:
\begin{align}
    \label{eq:theorem-loss-1}
    I(X_1, X_2; Y_0) \triangleq H(Y_0) - H(Y_0 | X_1, X_2) = H(Y_0)
    .
\end{align}
Since $Y_1$ is a deterministic function of $X_1$, we have for the conditional rate-distortion function that:
\begin{align}
    \label{eq:theorem-loss-2}
    R_{X_1|Y_0}(D_1)
    \triangleq
    \min
    I(X_1;Y_1|Y_0)
    =
    \min
    \left\{
    H(Y_1|Y_0)
    -
    H(Y_1|X_1,Y_0)
    \right\}
    =
    \min
    H(Y_1|Y_0),
\end{align}
where the minimum is over all probability distributions $\smash{P(Y_1|X_1) \in \mathcal{P}_{Y_1|X_1}}$ such that the resulting $Y_1$ produces at most distortion $D_1$. The converse holds, where, since $Y_2$ is a deterministic function of $X_2$, we have that:
\begin{align}
    \label{eq:theorem-loss-3}
    R_{X_2|Y_0}(D_2)
    \triangleq
    \min
    I(X_2;Y_2|Y_0)
    =
    \min
    \left\{
    H(Y_2|Y_0)
    -
    H(Y_2|X_2,Y_0)
    \right\}
    =
    \min
    H(Y_2|Y_0),
\end{align}
where the minimum is over all probability distributions $\smash{P(Y_2|X_2) \in \mathcal{P}_{Y_2|X_2}}$ such that the resulting $Y_2$ produces at most distortion $D_2$.

An optimization over probability distributions of random variables can be swapped by an optimization over a family of functions generating those random variables, as long as the function producing the probability distribution minimizing the first problem is in the family of functions. Under this assumption, replacing the terms in Eq.~\ref{eq:wg-loss}: 
\begin{align}
    T(\alpha_1, \alpha_2; D_1,D_2)
    &\triangleq
    \inf
    \left\{
    I(X_1, X_2; Y_0)
    +
    \alpha_1
    R_{X_1|Y_0}(D_1)
    +
    \alpha_2
    R_{X_2|Y_0}(D_2)
    \right\}
    ,
\end{align}
with Equations \ref{eq:theorem-loss-1}, \ref{eq:theorem-loss-2}, and \ref{eq:theorem-loss-3}, we arrive at the desired result.
\end{proof}

\section{Theory of compatible representations}
\label{appendix:compatible}
We define the compatibility between two representations as the measurable difficulty of learning a task that tries to match them in Euclidean space. Since their information content can be different, it might not be possible for the two representations to closely match. However, we do not propose to use as a measure the loss itself but the generalization error induced by the family of functions used to generate these representations. An increase in the upper-bound on this generalization error means that the corresponding model has the potential to make the representations more difficult to bring back to a common manifold.

Within this section, we override all previous established notation:
\begin{definition}
Let $f: \mathbb{R}^D \to \mathbb{R}^D$ be a function representing a neural network layer such as as dense, convolutional, ResNet, or transformer layer. Let $\sigma$ be an element-wise function, such as a ReLU, ELU, sigmoid, tanh, or an identity function. Let $f \in \mathcal{F}$ be a family of neural network layers. We define a neural network family of $L \in \mathbb{Z}_{+}$ homogeneous layers as:
\begin{align}
    \mathcal{F}_L
    =
    \big\{
    &
    f : f(\mathbf{y})
    =
    (f_L \circ ... \circ \sigma \circ f_2 \circ \sigma \circ f_1)
    (\mathbf{y})
    ,\; \forall
    (f_1, ..., f_L)
    \in
    \mathcal{F}_1 \times ... \times \mathcal{F}_L;
    \\
    &
    \mathcal{F}_i = \mathcal{F}
    ,
    \; \forall
    i \in \{ 1, ..., L \}
    \big\}
    .
\end{align}
Let $\smash{\mathcal{D} = \left\{ \left( \left[\mathbf{y_1}\right]_i, \left[\mathbf{y_2}\right]_i \right) \right\}_{i=1}^{N} \sim Y_1, Y_2}$ be a set of samples from the joint distribution $Y_1,Y_2$. We define the compatibility between representations $Y_1$ and $Y_2$ as the generalization error $C_{\mathcal{F}_L}(\mathcal{D}; Y_1, Y_2)$ of a regression task of $Y_1$ onto $Y_2$:
\begin{gather}
    C_{\mathcal{F}_L}(\mathcal{D}; Y_1, Y_2)
    =
    \sup_{\ell \in \mathcal{L}_{\mathcal{F}_L}}
    \abs{E_{Y_1, Y_2}(\ell) - E_\mathcal{D}(\ell)}
    ;\;
    \\
    \mathcal{L}_{\mathcal{F}_L}
    =
    \left\{
    \ell
    :
    \ell(\mathbf{y}_1, \mathbf{y}_2)
    =
    \norm{f(\mathbf{y}_1) - \mathbf{y}_2}
    ,\;
    \forall
    \,
    f \in \mathcal{F}_L
    \right\}
    ,
    \\
    E_{Y_1, Y_2}(\ell)
    =
    \mathbb{E}_{(\mathbf{y}_1, \mathbf{y}_2) \sim Y_1, Y_2}
    \left[
    \ell(\mathbf{y}_1, \mathbf{y}_2)
    \right]
    ,
    \;
    E_\mathcal{D}(\ell)
    =
    \frac{1}{N}
    \sum_{(\mathbf{y_1}, \mathbf{y_2}) \in \mathcal{D}}
    \ell(\mathbf{y}_1, \mathbf{y}_2)
    .
\end{gather}
\end{definition}
\begin{lemma}
\label{lemma:rad}
Let the functions in the family $\mathcal{S} \subset \mathcal{F}_L$ be Lipschitz such that $\mathrm{Lip}(f) \leq \mu \; \forall f \in \mathcal{S}$, where $\mathrm{Lip}(f)$ retrieves the Lipschitz constant of a function $f$. The Rademacher complexity $\mathrm{Rad}_{\mathcal{D}}(\mathcal{L}_{\mathcal{S}})$ is bounded by:
\begin{align}
    \mathrm{Rad}_{\mathcal{D}}(\mathcal{L}_{\mathcal{S}})
    \leq
    \mu \mathrm{Rad}(\mathcal{D}_1) + \mathrm{Rad}(\mathcal{D}_2)
    ,
\end{align}
where $\mathcal{D}_1 = \{ \mathbf{y}_1: (\mathbf{y}_1, \mathbf{y}_2) \in \mathcal{D}  \}$, $\mathcal{D}_2 = \{ \mathbf{y}_2: (\mathbf{y}_1, \mathbf{y}_2) \in \mathcal{D}  \}$, and $\mathrm{Rad}(\mathcal{D})$ is the Rademacher complexity of a set $\mathcal{D}$.

\begin{proof}
Let $A$ be a random variable with sample space $\smash{\{-1,1\}^N}$ following the Rademacher distribution. We have that:
\begin{align}
    \mathrm{Rad}_{\mathcal{D}}(\mathcal{L}_{\mathcal{S}})
    &=
    \frac{1}{N}
    \mathbb{E}_{\mathbf{a} \sim A}
    \left[
    \sup_{\ell \in \mathcal{L}_{\mathcal{S}}}
    \abs{
    \sum_{i=1}^N
    a_i
    \ell
    \left(
    \left[\mathbf{y}_1\right]_i,
    \left[\mathbf{y}_2\right]_i
    \right)
    }
    \right]
    \\
    \label{eq:talengrad-1}
    &\leq
    \frac{1}{N}
    \mathbb{E}_{\mathbf{a} \sim A}
    \left[
    \sup_{f \in \mathcal{S}}
    \abs{
    \sum_{i=1}^N
    a_i
    \left[
    f
    \left(
    \left[\mathbf{y}_1\right]_i
    \right)
    -
    \left[\mathbf{y}_2\right]_i
    \right]
    }
    \right]
    \\
    &=
    \frac{1}{N}
    \mathbb{E}_{\mathbf{a} \sim A}
    \left[
    \sup_{f \in \mathcal{S}}
    \abs{
    \sum_{i=1}^N
    a_i
    f
    \left(
    \left[\mathbf{y}_1\right]_i
    \right)
    -
    \sum_{i=1}^N
    a_i
    \left[\mathbf{y}_2\right]_i
    }
    \right]
    \\
    &\leq
    \label{eq:sub-sum-eq}
    \frac{1}{N}
    \mathbb{E}_{\mathbf{a} \sim A}
    \left[
    \sup_{f \in \mathcal{S}}
    \left\{
    \abs{
    \sum_{i=1}^N
    a_i
    f
    \left(
    \left[\mathbf{y}_1\right]_i
    \right)
    }
    +
    \abs{
    \sum_{i=1}^N
    a_i
    \left[\mathbf{y}_2\right]_i
    }
    \right\}
    \right]
    \\
    &=
    \frac{1}{N}
    \mathbb{E}_{\mathbf{a} \sim A}
    \left[
    \sup_{f \in \mathcal{S}}
    \abs{
    \sum_{i=1}^N
    a_i
    f
    \left(
    \left[\mathbf{y}_1\right]_i
    \right)
    }
    \right]
    +
    \frac{1}{N}
    \mathbb{E}_{\mathbf{a} \sim A}
    \left[
    \sup_{f \in \mathcal{S}}
    \abs{
    \sum_{i=1}^N
    a_i
    \left[\mathbf{y}_2\right]_i
    }
    \right]
    \\
    &=
    \mathrm{Rad}_{\mathcal{D}_1}(\mathcal{S}) + \mathrm{Rad}(\mathcal{D}_2)
    \\
    \label{eq:talengrad-2}
    &\leq
    \mu \mathrm{Rad}(\mathcal{D}_1) + \mathrm{Rad}(\mathcal{D}_2)
    .
\end{align}
Eq.~\ref{eq:talengrad-1} uses the Kakade \& Tewari Lemma \citep{KakadeTewari} based on Talagrand’s contraction principle \citep{ledoux2013probability, BartlettM02}. It states that if all vectors in a set $A$ are operated by a Lipschitz function, then $\mathrm{Rad}(A)$ is at most multiplied by the Lipschitz constant of the function. It is easy to show that the norm is 1-Lipschitz. Eq.~\ref{eq:talengrad-2} uses the same lemma. Eq.~\ref{eq:sub-sum-eq} uses $\abs{a - b} \leq \abs{a} + \abs{b}$.
\end{proof}
\end{lemma}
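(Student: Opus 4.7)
The plan is to start from the definition of Rademacher complexity
\[
\mathrm{Rad}_{\mathcal{D}}(\mathcal{L}_{\mathcal{S}})
=
\tfrac{1}{N}\,
\mathbb{E}_{\mathbf{a}\sim A}
\!\left[\sup_{\ell\in\mathcal{L}_{\mathcal{S}}}
\Big|\sum_{i=1}^N a_i\,\ell\big([\mathbf{y}_1]_i,[\mathbf{y}_2]_i\big)\Big|\right]
\]
and reduce it, in two peelings, to the Rademacher complexities of the marginal sets $\mathcal{D}_1$ and $\mathcal{D}_2$. The key tool throughout is Talagrand's contraction principle in the Kakade--Tewari form: if every element of a class is composed with a $\kappa$-Lipschitz map, the Rademacher complexity is inflated by at most $\kappa$.

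First I would strip the outer norm. Since $\ell(\mathbf{y}_1,\mathbf{y}_2)=\|f(\mathbf{y}_1)-\mathbf{y}_2\|$ and the norm is $1$-Lipschitz, applying Kakade--Tewari lets me replace $\ell$ by the argument $f(\mathbf{y}_1)-\mathbf{y}_2$ inside the supremum at no cost. Next I would split that argument using the elementary inequality $|a-b|\le|a|+|b|$ and the subadditivity of the supremum, obtaining the sum of two independent Rademacher averages: one over the class $\mathcal{S}$ applied to $\mathcal{D}_1$, and one over the fixed vectors in $\mathcal{D}_2$.

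The second term is exactly $\mathrm{Rad}(\mathcal{D}_2)$, since it no longer depends on $f$. For the first term, I would apply Kakade--Tewari a second time, this time with $\kappa=\mu$, using the hypothesis that every $f\in\mathcal{S}$ is $\mu$-Lipschitz; this reduces the Rademacher average of $\{f(\mathbf{y}_1):f\in\mathcal{S}\}$ to $\mu$ times the Rademacher complexity of the raw inputs $\mathcal{D}_1$. Summing the two contributions yields the claimed bound $\mu\,\mathrm{Rad}(\mathcal{D}_1)+\mathrm{Rad}(\mathcal{D}_2)$.

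The main obstacle I expect is that both the norm and the elements of $\mathcal{S}$ act on vector-valued objects, whereas the textbook contraction principle is usually stated for scalar compositions. I would address this either by invoking a vector-valued version of the contraction lemma or by proceeding coordinate-wise and then re-aggregating. A secondary subtlety is that the supremum sits inside an absolute value; this is harmless because the Rademacher variables $a_i$ are symmetric, so one may drop the outer $|\cdot|$ when applying contraction and re-introduce it afterwards via $\sup|g+h|\le\sup|g|+\sup|h|$.
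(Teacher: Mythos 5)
Your proposal is correct and follows essentially the same route as the paper's proof: one contraction step to peel off the $1$-Lipschitz norm, a triangle-inequality split $\abs{a-b}\leq\abs{a}+\abs{b}$ together with subadditivity of the supremum to separate the two data sets, and a second contraction step with constant $\mu$ to reduce $\mathrm{Rad}_{\mathcal{D}_1}(\mathcal{S})$ to $\mu\,\mathrm{Rad}(\mathcal{D}_1)$. The vector-valued contraction subtlety you flag is real and is not addressed in the paper either, so your treatment is, if anything, slightly more careful.
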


We can upper-bound the compatibility measure $C_{\mathcal{S}}(\mathcal{D}; Y_1, Y_2)$ using a well-known generalization error bound in terms of Rademacher complexity:
\begin{theorem}
\label{theorem:rad}
Let the functions in the family $\mathcal{S} \subset \mathcal{F}_L$ be Lipschitz such that $\mathrm{Lip}(f) \leq \mu \; \forall f \in \mathcal{S}$, and let the loss function be bounded such that $\smash{\ell(\mathbf{y}_1, \mathbf{y}_2) \in [-B,B] \; \forall \ell \in \mathcal{L}_{\mathcal{S}}, \mathbf{y}_1 \in \mathcal{Y}_1, \mathbf{y}_2 \in \mathcal{Y}_2}$, where $\mathcal{Y}_1$ and $\mathcal{Y}_2$ are the sample spaces of $Y_1$ and $Y_2$, respectively. Then, $\forall \delta \in (0, 1)$, with probability at least $1 - \delta$, we have:
\begin{align}
    C_{\mathcal{S}}(\mathcal{D}; Y_1, Y_2)
    &\leq
    2 \left[
    \mu \mathrm{Rad}(\mathcal{D}_1) + \mathrm{Rad}(\mathcal{D}_2)
    \right]
    +
    B
    \sqrt{\nicefrac{2}{N} \log \nicefrac{2}{\delta}}
\end{align}

\begin{proof}
\begin{align}
    C_{\mathcal{S}}(\mathcal{D}; Y_1, Y_2)
    &\leq
    \label{eq:gen-error}
    2
    \mathrm{Rad}_{\mathcal{D}}(\mathcal{L}_{\mathcal{S}})
    +
    B
    \sqrt{\nicefrac{2}{N} \log \nicefrac{2}{\delta}}
    \\
    \label{eq:lemma-rad}
    &\leq
    2 \left[
    \mu \mathrm{Rad}(\mathcal{D}_1) + \mathrm{Rad}(\mathcal{D}_2)
    \right]
    +
    B
    \sqrt{\nicefrac{2}{N} \log \nicefrac{2}{\delta}}
    .
\end{align}
Eq.~\ref{eq:gen-error} uses Theorem 26.5 in \citet{LearningTheory}. Eq.~\ref{eq:lemma-rad} uses Lemma~\ref{lemma:rad}.
\end{proof}
\end{theorem}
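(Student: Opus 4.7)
The plan is to combine two ingredients that together do essentially all the work: a standard Rademacher-complexity--based uniform convergence bound for bounded loss classes, and the bound on $\mathrm{Rad}_{\mathcal{D}}(\mathcal{L}_{\mathcal{S}})$ already established in Lemma~\ref{lemma:rad}. The first observation is that $C_{\mathcal{S}}(\mathcal{D}; Y_1, Y_2)$ is, by construction, the supremum over $\ell \in \mathcal{L}_{\mathcal{S}}$ of $|E_{Y_1,Y_2}(\ell) - E_{\mathcal{D}}(\ell)|$, i.e., exactly the two-sided uniform deviation between empirical and population risk for the loss class $\mathcal{L}_{\mathcal{S}}$, so any generic generalization bound for bounded loss classes applies directly.

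First I would invoke a standard symmetrization plus McDiarmid-style argument (e.g., Theorem~26.5 of \citet{LearningTheory}): for any loss class whose outputs lie in $[-B,B]$ and any i.i.d.\ sample $\mathcal{D}$ of size $N$, with probability at least $1-\delta$ the uniform deviation is bounded by $2\,\mathrm{Rad}_{\mathcal{D}}(\mathcal{L}_{\mathcal{S}}) + B\sqrt{\nicefrac{2}{N}\log \nicefrac{2}{\delta}}$. The boundedness hypothesis on $\ell$ in the theorem statement is precisely what is needed to apply this result to $C_{\mathcal{S}}(\mathcal{D}; Y_1, Y_2)$.

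Next I would substitute Lemma~\ref{lemma:rad} into the Rademacher term. That lemma already uses the Lipschitz hypothesis together with Talagrand's contraction principle (in its Kakade--Tewari form) to obtain $\mathrm{Rad}_{\mathcal{D}}(\mathcal{L}_{\mathcal{S}}) \leq \mu\,\mathrm{Rad}(\mathcal{D}_1) + \mathrm{Rad}(\mathcal{D}_2)$. Multiplying by two and combining with the concentration term yields the claimed inequality.

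The hard part is really upstream, inside Lemma~\ref{lemma:rad}: peeling the composite loss $\|f(\mathbf{y}_1)-\mathbf{y}_2\|$ into two separately controllable Rademacher averages via the triangle inequality and two applications of the contraction lemma (once for the $1$-Lipschitz norm, once for $f$ with constant $\mu$). Conditional on Lemma~\ref{lemma:rad}, the theorem itself is a two-line citation and presents no new obstacle, beyond taking care that the constant in front of $\mathrm{Rad}(\mathcal{D}_1)$ should be the Lipschitz bound $\mu$ rather than the stray $K$ that appears in the displayed statement.
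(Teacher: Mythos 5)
Your proposal is correct and takes exactly the paper's route: apply the standard Rademacher-based uniform-deviation bound (Theorem 26.5 of Shalev-Shwartz and Ben-David) to the bounded loss class, then substitute Lemma~\ref{lemma:rad}. You also correctly flag that the $K$ in the displayed theorem statement is a typo for the Lipschitz constant $\mu$, which is what the proof actually produces.
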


\begin{theorem}
\label{theorem:resnet}
Assume that the representations $Y_1$ and $Y_2$ are functions of a common random variable $X$, such that:
\begin{gather}
    Y_1 = g_1(X);
    g_1 \in \mathcal{G}_{L_1};
    \mathcal{G}_{L_1} \subset \mathcal{F}_{L_1}
    ,
    \quad
    Y_2 = g_2(X);
    g_2 \in \mathcal{G}_{L_2};
    \mathcal{G}_{L_2} \subset \mathcal{F}_{L_2}
    ,
\end{gather}
where $\mathcal{G}_{L_1}$ and $\mathcal{G}_{L_2}$ are subsets corresponding to ResNets with 1-Lipschitz activations of $L_1$ and $L_2$ layers, respectively. Further assume that the kernels $\smash{\{W^{(l)}_i\}_{i=1}^D; W_i \in \mathbb{R}^{K \times K}}$, for the convolutional layers $l =\{1,...,\max \{L_1,L_2 \}\}$ in $\mathcal{G}_{L_1}$ and $\mathcal{G}_{L_2}$, are bounded such that:
\begin{align}
    \label{eq:lip-conv}
    \sup_{l=1}^{\max\{L_1,L_2\}}
    \sup_{i=1}^D
    \sum_{j=1}^K
    \sum_{k=1}^K
    \abs{ \left[ W^{(l)}_i \right]_{j,k} }
    \leq
    \gamma
    ,
\end{align}
where $\gamma \geq 0$. Let $\mathcal{D}_X = \{\mathbf{x}_1\}_{i=1}^N \sim X$ be a set of samples from $X$. Then, $\forall \delta \in (0, 1)$, with probability at least $1 - \delta$, we have: 
\begin{align}
    C_{\mathcal{S}}(\mathcal{D}; Y_1, Y_2)
    &\leq
    2
    \left[
    \mu
    (1 + \gamma)^{L_1}
    +
    (1 + \gamma)^{L_2}
    \right]
    \mathrm{Rad}(\mathcal{D}_X)
    +
    B
    \sqrt{\nicefrac{2}{N} \log \nicefrac{2}{\delta}}
    .
\end{align}

\begin{proof}
\begin{align}
    C_{S}(\mathcal{D}; Y_1, Y_2)
    \label{eq:theorem-rad}
    &\leq
    2 \left[
    \mu \mathrm{Rad}(\mathcal{D}_1) + \mathrm{Rad}(\mathcal{D}_2)
    \right]
    +
    B
    \sqrt{\nicefrac{2}{N} \log \nicefrac{2}{\delta}}
    \\
    &=
    2 \left[
    \mu \mathrm{Rad}_{\mathcal{D}_X}(\mathcal{G}_{L_1}) + \mathrm{Rad}_{\mathcal{D}_X}(\mathcal{G}_{L_2})
    \right]
    +
    B
    \sqrt{\nicefrac{2}{N} \log \nicefrac{2}{\delta}}
    \\
    \label{eq:talengrad-3}
    &\leq
    2
    \left[
    \mu
    \mathrm{Lip}(\mathcal{G}_{L_1})
    +
    \mathrm{Lip}(\mathcal{G}_{L_2})
    \right]
    \mathrm{Rad}(\mathcal{D}_X)
    +
    B
    \sqrt{\nicefrac{2}{N} \log \nicefrac{2}{\delta}}
    \\
    \label{eq:resnet}
    &\leq
    2
    \left[
    \mu
    (1 + \gamma)^{L_1}
    +
    (1 + \gamma)^{L_2}
    \right]
    \mathrm{Rad}(\mathcal{D}_X)
    +
    B
    \sqrt{\nicefrac{2}{N} \log \nicefrac{2}{\delta}}
    .
\end{align}
Eq.~\ref{eq:theorem-rad} uses Theorem~\ref{theorem:rad}. Eq.~\ref{eq:talengrad-3} uses the Kakade \& Tewari Lemma previously introduced. Eq.~\ref{eq:resnet} uses Lemma 2 in ~\citet{BehrmannGCDJ19}, where for 
any functions $f$ and $g$, such that $f(x) = x + g(x)$ with $\mathrm{Lip}(g) = \gamma$ Then, it holds that $\mathrm{Lip}(f) \leq 1 + \gamma$. The Lipschitz constant for a convolutional layer with 1-Lipschitz activations (e.g. ReLU) is defined in \citet{Truong} and shown in Eq.~\ref{eq:lip-conv}. We also use $\mathrm{Lip}(g \circ f) = \mathrm{Lip}(g)\mathrm{Lip}(f)$, for any Lipschitz functions $g$ and $f$.
\end{proof}
\end{theorem}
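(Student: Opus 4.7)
The plan is to unwind the statement in three stages: first reduce compatibility to a generalization bound, then translate Rademacher complexities of the image sets into the Rademacher complexity of the source sample, and finally bound the Lipschitz constants of the ResNet hypothesis classes.

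First, I would invoke Theorem~\ref{theorem:rad} directly, since the hypotheses of that theorem (Lipschitz loss class, bounded loss) are precisely the standing assumptions here. This gives an upper bound on $C_{\mathcal{S}}(\mathcal{D}; Y_1, Y_2)$ in terms of $\mathrm{Rad}(\mathcal{D}_1)$ and $\mathrm{Rad}(\mathcal{D}_2)$ plus the usual $B\sqrt{(2/N)\log(2/\delta)}$ concentration term. This step is essentially bookkeeping.

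Next, because $Y_1 = g_1(X)$ with $g_1 \in \mathcal{G}_{L_1}$ and similarly for $Y_2$, the sets $\mathcal{D}_1$ and $\mathcal{D}_2$ are images of $\mathcal{D}_X$ under functions in $\mathcal{G}_{L_1}$ and $\mathcal{G}_{L_2}$, respectively. So $\mathrm{Rad}(\mathcal{D}_1) = \mathrm{Rad}_{\mathcal{D}_X}(\mathcal{G}_{L_1})$ and $\mathrm{Rad}(\mathcal{D}_2) = \mathrm{Rad}_{\mathcal{D}_X}(\mathcal{G}_{L_2})$. I would then apply the Kakade--Tewari/Talagrand contraction lemma already used in the proof of Lemma~\ref{lemma:rad} to get $\mathrm{Rad}_{\mathcal{D}_X}(\mathcal{G}_{L_i}) \leq \mathrm{Lip}(\mathcal{G}_{L_i})\,\mathrm{Rad}(\mathcal{D}_X)$ for $i \in \{1, 2\}$.

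The substantive step is bounding $\mathrm{Lip}(\mathcal{G}_{L_i})$ by $(1+\gamma)^{L_i}$. Here I would use the Behrmann et al.\ residual-block lemma that, for a block of the form $f(x) = x + g(x)$, one has $\mathrm{Lip}(f) \leq 1 + \mathrm{Lip}(g)$. The kernel bound \eqref{eq:lip-conv} together with the Truong bound on the Lipschitz constant of a convolutional layer with 1-Lipschitz activations forces $\mathrm{Lip}(g) \leq \gamma$ for the residual branch of each block. Composing $L_i$ such blocks and using multiplicativity of the Lipschitz constant under composition yields $\mathrm{Lip}(\mathcal{G}_{L_i}) \leq (1+\gamma)^{L_i}$.

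Assembling these three reductions and substituting into the bound from Theorem~\ref{theorem:rad} produces the claimed inequality. The main potential obstacle is subtle: since each ``layer'' in $\mathcal{G}_{L_i}$ is really a residual block (not a bare convolution), one must be careful that the kernel bound controls only the residual branch $g$ and that the additive skip connection is absorbed via the $1+\gamma$ factor rather than $\gamma$ alone. Beyond this, the proof is a straightforward chain of applications of previously established results, paralleling closely the derivation already given for Theorem~\ref{theorem:rad}.
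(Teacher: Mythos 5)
Your proposal is correct and follows essentially the same route as the paper's proof: apply Theorem~\ref{theorem:rad}, identify $\mathrm{Rad}(\mathcal{D}_1)$ and $\mathrm{Rad}(\mathcal{D}_2)$ with $\mathrm{Rad}_{\mathcal{D}_X}(\mathcal{G}_{L_1})$ and $\mathrm{Rad}_{\mathcal{D}_X}(\mathcal{G}_{L_2})$, contract via Kakade--Tewari, and bound $\mathrm{Lip}(\mathcal{G}_{L_i})$ by $(1+\gamma)^{L_i}$ using the residual-block lemma, the convolutional Lipschitz bound, and multiplicativity under composition. Your remark about the skip connection contributing the $1+\gamma$ factor (rather than $\gamma$) is exactly the point the paper's citation of Behrmann et al.\ is handling.
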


\subsection{Analysis}
Theorem~\ref{theorem:resnet} shows that, when $\gamma \neq 0$, the compatibility between representations $Y_1$ and $Y_2$ decreases with the depth of the ResNets producing them. For representations that share a neural network, such that the representations $Y_1$ and $Y_2$ are splits of its output, $X$ corresponds to the output of the next to last layer and hence the neural network producing $Y_1$ and $Y_2$ would only have one layer, increasing their compatibility.

In our problem space, it is desirable for the common representation to be compatible with the private representations. The Separable approach has a dedicated ResNet for each representation, which produces less compatible representations compared to our approach, in which each private representation shares most of a ResNet with a copy of the common representation.

In the Combined approach, all representations share most of a ResNet, which increases compatibility among them, but constraints the private channels to produce similar information since they share many of the intermediate representations. A Combined approach with higher dimensionality could compensate for this limitation, but it would be more inefficient than the proposed method, since the kernels would have to be larger in order to accommodate for the larger dimensionality, and parts of the input for each layer might be irrelevant to some of the target representations.

\section{Additional details and results}
\label{appendix:additional}

\subsection{Architecture details}

\begin{figure}[!tb]
    \begin{subfigure}{\linewidth}
        \centering
        \includegraphics[width=0.85\linewidth]{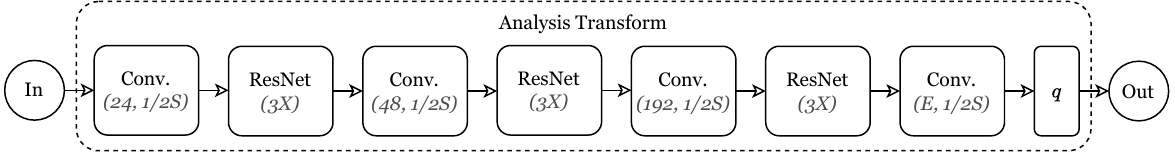}
        \caption{Analysis transform}
    \end{subfigure}
    \begin{subfigure}{\linewidth}
        \centering
        \includegraphics[width=0.85\linewidth]{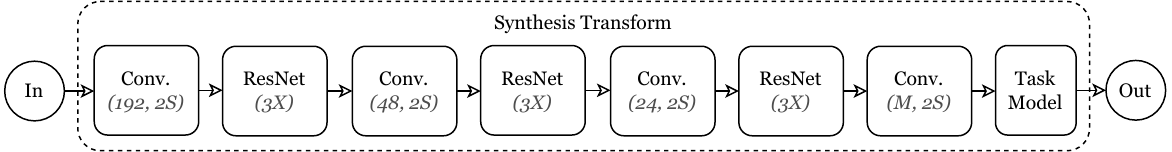}
        \caption{Synthesis transform}
    \end{subfigure}
    \caption{Architecture diagrams of the analysis and synthesis transforms. The number of channels in the input is denoted as $M$. The values assigned to $S$ corresponds to a multiplying change in the spatial dimensions. The values assigned to $X$ represents the number of blocks. The values 24, 48, and 192 correspond to the number of channels in the output representation of the module.}
    \label{fig:transforms}
\end{figure}

\begin{figure}[!tb]
    \centering
    \includegraphics[width=0.65\linewidth]{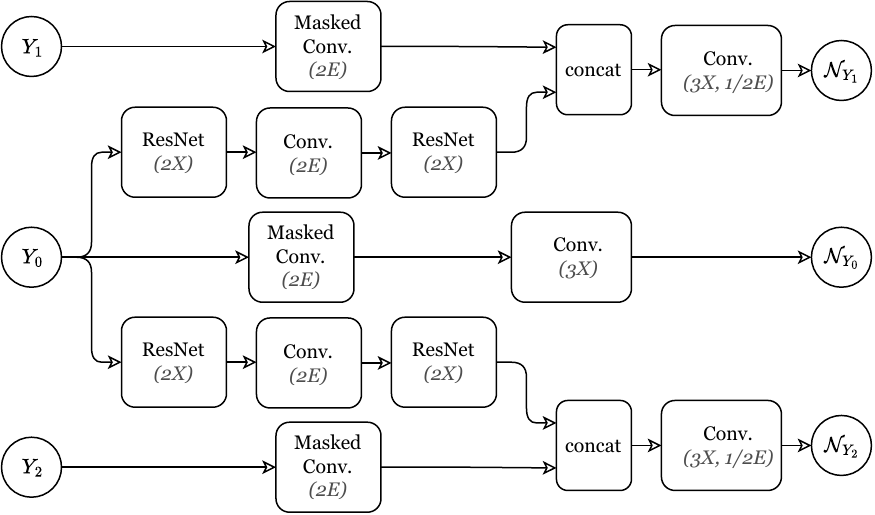}
    \caption{Architecture overview of the three entropy models. $\smash{\mathcal{N}_{Y_{\{0,1,2\}}}}$ represents the multivariate normal parameters with diagonal covariances assigned to $\smash{Y_{\{0,1,2\}}}$. They define $\smash{\tilde{P}_{Y_{\{0,1,2\}}}}$. The values assigned to $X$ represents the number of blocks. The values assigned to $E$ correspond to a multiplying change in the number of channels. For example, ``Conv. \textit{(3X)}'' corresponds to 3 convolutional layers, and ``Conv. \textit{(2E)}'' corresponds to a convolutional layer that doubles the number of channels.}
    \label{fig:entropy-model}
\end{figure}

\begin{figure}[!tb]
    \begin{subfigure}{0.54\linewidth}
        \centering
        \includegraphics[width=0.985\linewidth]{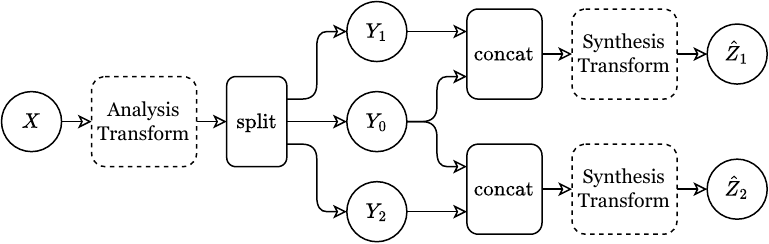}
        \vspace*{0.3cm}
        \caption{Combined}
    \end{subfigure}
    \begin{subfigure}{0.455\linewidth}
        \centering
        \includegraphics[width=0.985\linewidth]{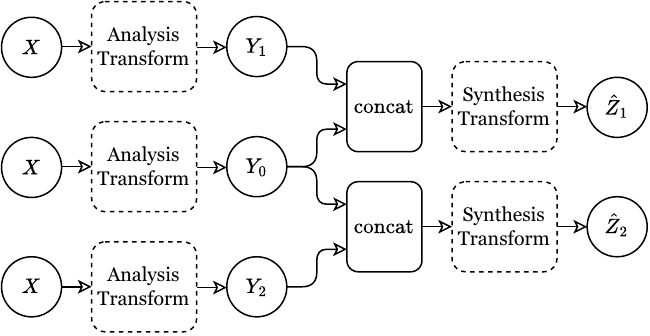}
        \caption{Separated}
    \end{subfigure}
    \begin{subfigure}{0.495\linewidth}
        \centering
        \includegraphics[width=0.78\linewidth]{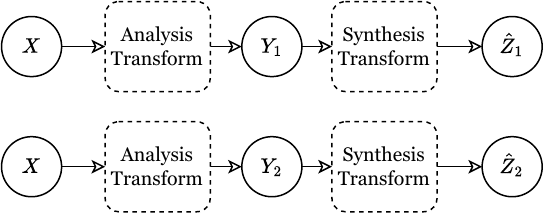}
        \caption{Independent}
    \end{subfigure}
    \begin{subfigure}{0.495\linewidth}
        \centering
        \includegraphics[width=0.78\linewidth]{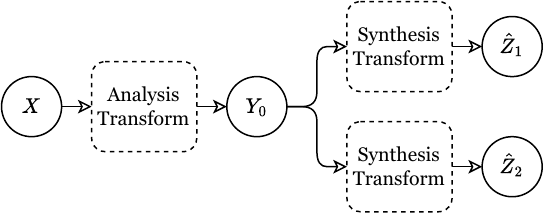}
        \caption{Joint}
    \end{subfigure}
    \caption{Architecture diagrams of the codecs used as benchmarks. The Combined and Separated approaches have three channels each, and are considered different versions of a Gray-Wyner Network. The Independent codec has two channels, and the Joint codec only has one. For simplicity, the entropy models that code $\smash{Y_{\{0,1,2\}}}$ are omitted.}
    \label{fig:models}
\end{figure}

Figure~\ref{fig:transforms} shows architecture diagrams of the analysis and synthesis transforms. Figure~\ref{fig:entropy-model} shows an architecture diagram of the three entropy models used in the proposed method. All ResNets use ELU activation functions. The number of channels in the intermediate representation in the ResNet blocks have the same number of channels as the input (and output).

Figure~\ref{fig:models} presents an overview of the codec architectures used as benchmarks. The entropy model architecture used for the three-channel codecs is the same one described for the proposed architecture. The two-channel and one-channel codecs have the same entropy model architecture as the one used for the $Y_0$ representation in the proposed architecture.

\FloatBarrier
\subsection{Training details}
\label{appendix:training}

\begin{table}[!tb]
\centering
\caption{Hyper-parameter and training settings}
\label{table:parameters}
\begin{tabular}{lrrrr}
    \toprule
    Parameter & Synthetic & MNIST & Cityscapes & COCO 2017 \\
    \midrule
    Precision & float32 & float32 & float32 & float32 \\
    Input representation channels ($M$) & 2 & 3 & 3 & 3 \\
    Target representation channels ($E$) & 512 & 6 & 192 & 192/288 \\
    Auxiliary loss weight ($\gamma$) & 1 & 1 & 1 & 1 \\
    Reconstruction loss weight & 0 & 0 & 1 & 1 \\
    \midrule
    Random horizontal flip & No & No & Yes & Yes \\
    Color jittering & No & No & Yes & No \\
    \midrule
    Batch size & 100 & 100 & 4 & 4 \\
    Accumulated gradient batches & 1 & 1 & 3 & 1 \\
    \midrule
    Optimizer & Adam & Adam & Adam & Adam \\
    Optimizer parameters & 0.9, 0.999 & 0.9, 0.999 & 0.9, 0.999 & 0.9, 0.999 \\
    Learning rate & 0.0001 & 0.0001 & 0.0001 & 0.0001 \\
    Weight decay & 0 & 0 & 0 & 0 \\
    Gradient norm clipping & 1 & 1 & 1 & 1 \\
    \midrule
    Patience & 10 & 25 & 50 & 5 \\
    Maximum number of epochs & 100 & 200 & 1,000 & 100 \\
    \bottomrule
\end{tabular}
\end{table}

Since the task losses (distortion functions) used are on a similar scale, we only consider rate-distortion points where $\lambda_1 = \lambda_2$. Thus, we scale the optimization objective $\mathcal{L}$ by $\smash{\eta = \nicefrac{1}{\lambda_{\{1,2\}}}}$ to obtain a single hyper-parameter controlling the rate-distortion tradeoff.

Table~\ref{table:parameters} shows the training details used in all experiments. Additional considerations for reproducibility are addressed in other related sections.

\FloatBarrier
\subsection{Transmit-receive tradeoff and ablation study}

\begin{figure}[!tb]
    \begin{subfigure}{0.495\linewidth}
        \includegraphics[width=\linewidth]{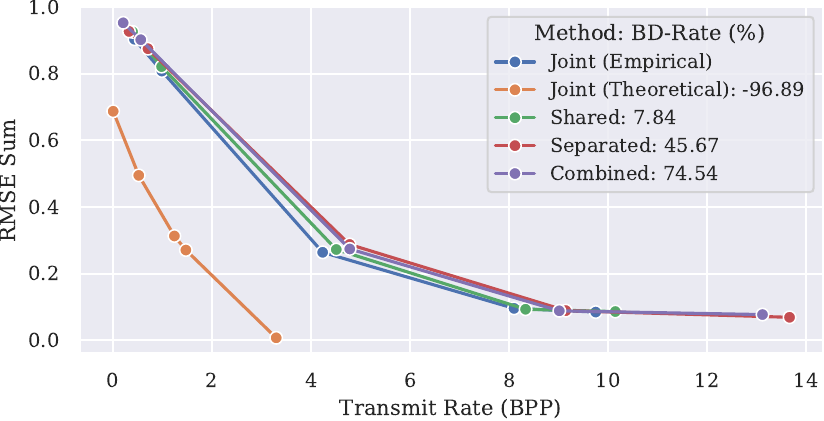}
        \caption{Transmit rate for $\beta = \nicefrac{3}{2}$}
    \end{subfigure}
    \begin{subfigure}{0.495\linewidth}
        \includegraphics[width=\linewidth]{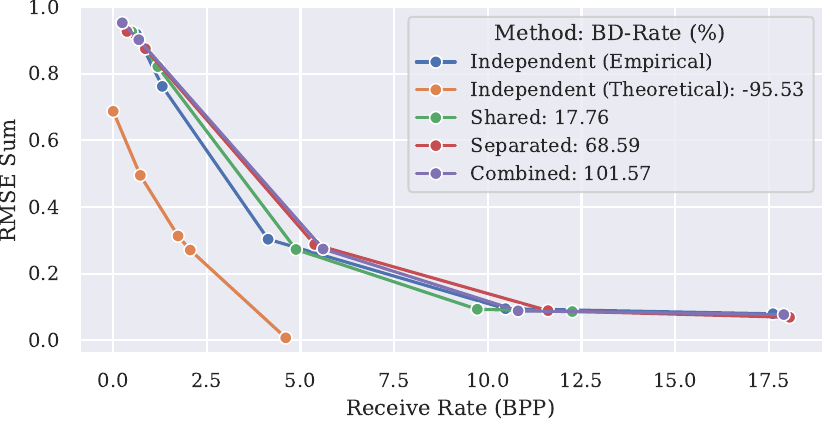}
        \caption{Receive rate for $\beta = \nicefrac{3}{2}$}
    \end{subfigure}
    \begin{subfigure}{0.495\linewidth}
        \includegraphics[width=\linewidth]{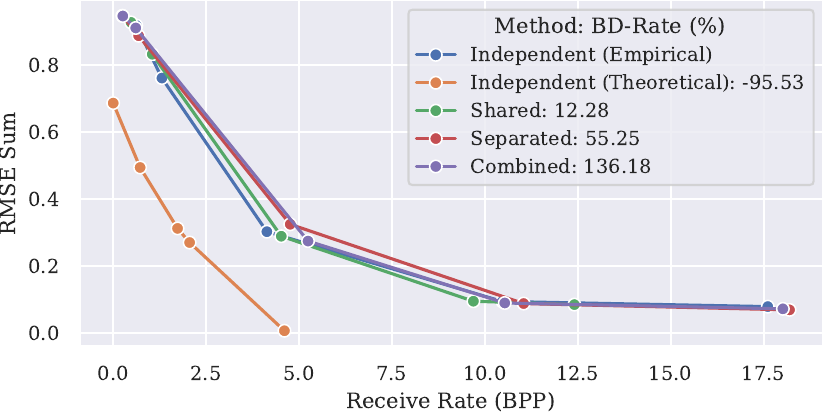}
        \caption{Receive rate for $\beta = 2$}
    \end{subfigure}
    \begin{subfigure}{0.495\linewidth}
        \includegraphics[width=\linewidth]{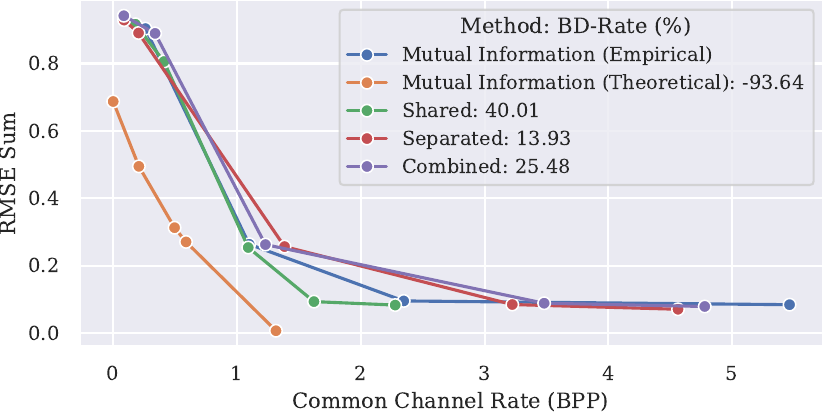}
        \caption{Common information for $\beta = 1$}
    \end{subfigure}
    \begin{subfigure}{0.495\linewidth}
        \includegraphics[width=\linewidth]{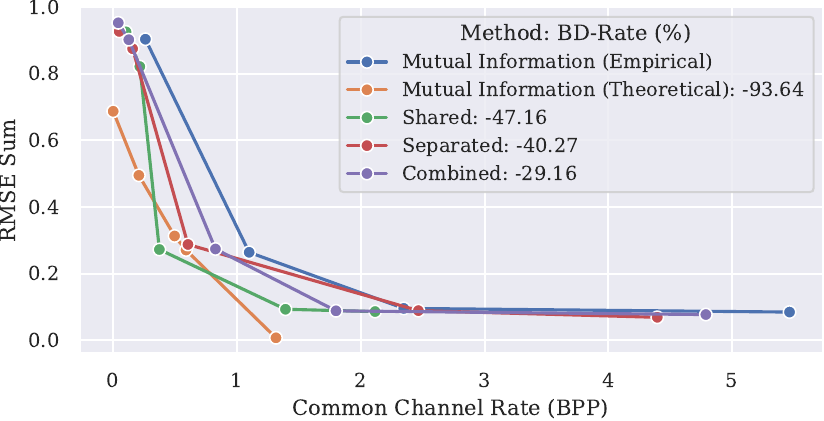}
        \caption{Common channel rate for $\beta = \nicefrac{3}{2}$}
    \end{subfigure}
    \begin{subfigure}{0.495\linewidth}
        \includegraphics[width=\linewidth]{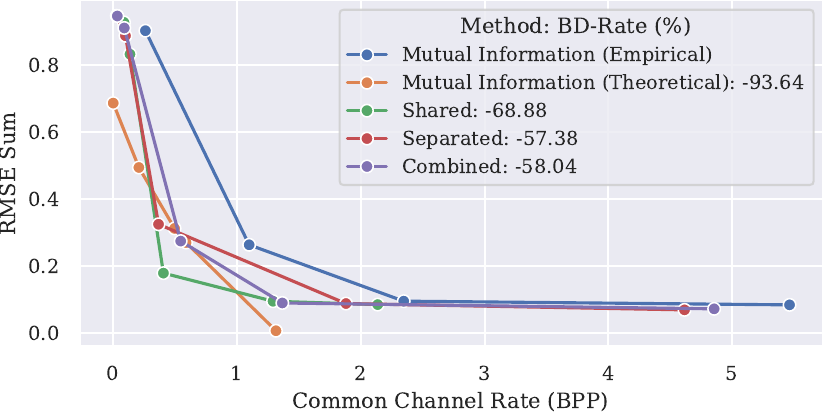}
        \caption{Common channel rate for $\beta = 2$}
    \end{subfigure}
    \caption{Additional rate-distortion curves for different methods, optimized for transmit, receive, and mixed rates. The common information axes correspond to the bitrates of the common channel, the theoretical measurements of mutual information, and its empirical estimates. BD-rates \citep{bjontegaard2001calculation} are computed with respect to the method with no score. Bits-per-pixel (BPP) is the bitrate scaled by $\nicefrac{1}{64 \times 64}$.}
    \label{fig:discrete-additional}
\end{figure}

\begin{table}[!tb]
\centering
\caption{Rate-distortion performance of baselines on synthetic dataset}
\label{table:discrete-1}
\begin{tabular}{lcrrrrrrrr}
    \toprule
    Codec & $\beta$ & $\eta$ & $R_1$ & $R_2$ & $R_0$ & $R_t$ & $R_r$ & $D_1$ & $D_2$ \\
    \midrule
    Independent & N/A & 0.001 & 8.699 & 8.913 & 0 & 17.612 & 17.612 & 0.040 & 0.039 \\
    Independent & N/A & 0.01 & 5.047 & 5.424 & 0 & 10.471 & 10.471 & 0.046 & 0.048 \\
    Independent & N/A & 0.1 & 2.077 & 2.059 & 0 & 4.136 & 4.136 & 0.152 & 0.151 \\
    Independent & N/A & 0.5 & 0.656 & 0.655 & 0 & 1.311 & 1.311 & 0.381 & 0.381 \\
    Independent & N/A & 1.0 & 0.287 & 0.334 & 0 & 0.621 & 0.621 & 0.459 & 0.460 \\
    \midrule
    Joint & N/A & 0.001 & 0 & 0 & 9.745 & 9.745 & 19.49 & 0.042 & 0.043 \\
    Joint & N/A & 0.01 & 0 & 0 & 8.096 & 8.096 & 16.192 & 0.048 & 0.048 \\
    Joint & N/A & 0.1 & 0 & 0 & 4.230 & 4.230 & 8.46 & 0.131 & 0.132 \\
    Joint & N/A & 0.5 & 0 & 0 & 0.983 & 0.983 & 1.966 & 0.402 & 0.406 \\
    Joint & N/A & 1.0 & 0 & 0 & 0.429 & 0.429 & 0.858 & 0.452 & 0.451 \\
    \bottomrule
\end{tabular}
\end{table}

\begin{table}[!tb]
\centering
\caption{Rate-distortion performance of proposed methods on synthetic dataset for $\beta = 1$}
\label{table:discrete-2}
\begin{tabular}{lcrrrrrrrr}
    \toprule
    Codec & $\beta$ & $\eta$ & $R_1$ & $R_2$ & $R_0$ & $R_t$ & $R_r$ & $D_1$ & $D_2$ \\
    \midrule
    Shared & 1 & 0.001 & 3.747 & 4.419 & 2.278 & 10.444 & 12.722 & 0.042 & 0.041 \\
    Shared & 1 & 0.01 & 3.110 & 3.639 & 1.622 & 8.371 & 9.993 & 0.046 & 0.048 \\
    Shared & 1 & 0.1 & 1.504 & 2.056 & 1.091 & 4.651 & 5.742 & 0.128 & 0.126 \\
    Shared & 1 & 0.5 & 0.247 & 0.509 & 0.410 & 1.166 & 1.576 & 0.403 & 0.403 \\
    Shared & 1 & 1.0 & 0.050 & 0.230 & 0.180 & 0.460 & 0.640 & 0.461 & 0.456 \\
    \midrule
    Separated & 1 & 0.001 & 4.287 & 4.604 & 4.563 & 13.454 & 18.017 & 0.036 & 0.035 \\
    Separated & 1 & 0.01 & 2.729 & 3.201 & 3.224 & 9.154 & 12.378 & 0.042 & 0.042 \\
    Separated & 1 & 0.1 & 1.781 & 1.792 & 1.384 & 4.957 & 6.341 & 0.129 & 0.128 \\
    Separated & 1 & 0.5 & 0.212 & 0.233 & 0.205 & 0.650 & 0.855 & 0.445 & 0.445 \\
    Separated & 1 & 1.0 & 0.106 & 0.134 & 0.088 & 0.328 & 0.416 & 0.467 & 0.462 \\
    \midrule
    Combined & 1 & 0.001 & 4.130 & 4.029 & 4.778 & 12.937 & 17.715 & 0.039 & 0.040 \\
    Combined & 1 & 0.01 & 2.807 & 2.749 & 3.481 & 9.037 & 12.518 & 0.044 & 0.045 \\
    Combined & 1 & 0.1 & 1.880 & 1.794 & 1.230 & 4.904 & 6.134 & 0.131 & 0.131 \\
    Combined & 1 & 0.5 & 0.233 & 0.201 & 0.339 & 0.773 & 1.112 & 0.443 & 0.447 \\
    Combined & 1 & 1.0 & 0.095 & 0.097 & 0.086 & 0.278 & 0.364 & 0.471 & 0.470 \\
    \bottomrule
\end{tabular}
\end{table}

\begin{table}[!tb]
\centering
\caption{Rate-distortion performance of proposed methods on synthetic dataset for $\beta = \nicefrac{3}{2}$}
\label{table:discrete-3}
\begin{tabular}{lcrrrrrrrr}
    \toprule
    Codec & $\beta$ & $\eta$ & $R_1$ & $R_2$ & $R_0$ & $R_t$ & $R_r$ & $D_1$ & $D_2$ \\
    \midrule
    Shared & \nicefrac{3}{2} & 0.001 & 3.662 & 4.363 & 2.116 & 10.141 & 12.257 & 0.043 & 0.043 \\
    Shared & \nicefrac{3}{2} & 0.01 & 3.238 & 3.698 & 1.392 & 8.328 & 9.720 & 0.047 & 0.046 \\
    Shared & \nicefrac{3}{2} & 0.1 & 2.007 & 2.125 & 0.373 & 4.505 & 4.878 & 0.137 & 0.136 \\
    Shared & \nicefrac{3}{2} & 0.5 & 0.295 & 0.468 & 0.215 & 0.978 & 1.193 & 0.413 & 0.409 \\
    Shared & \nicefrac{3}{2} & 1.0 & 0.066 & 0.222 & 0.104 & 0.392 & 0.496 & 0.471 & 0.455 \\
    \midrule
    Separated & \nicefrac{3}{2} & 0.001 & 4.518 & 4.746 & 4.395 & 13.659 & 18.054 & 0.034 & 0.035 \\
    Separated & \nicefrac{3}{2} & 0.01 & 3.382 & 3.295 & 2.465 & 9.142 & 11.607 & 0.044 & 0.045 \\
    Separated & \nicefrac{3}{2} & 0.1 & 2.052 & 2.118 & 0.604 & 4.774 & 5.378 & 0.143 & 0.145 \\
    Separated & \nicefrac{3}{2} & 0.5 & 0.277 & 0.269 & 0.157 & 0.703 & 0.860 & 0.435 & 0.441 \\
    Separated & \nicefrac{3}{2} & 1.0 & 0.140 & 0.132 & 0.049 & 0.321 & 0.370 & 0.463 & 0.464 \\
    \midrule
    Combined & \nicefrac{3}{2} & 0.001 & 4.254 & 4.071 & 4.788 & 13.113 & 17.901 & 0.039 & 0.038 \\
    Combined & \nicefrac{3}{2} & 0.01 & 3.618 & 3.591 & 1.799 & 9.009 & 10.809 & 0.044 & 0.044 \\
    Combined & \nicefrac{3}{2} & 0.1 & 1.996 & 1.957 & 0.826 & 4.779 & 5.605 & 0.137 & 0.137 \\
    Combined & \nicefrac{3}{2} & 0.5 & 0.215 & 0.216 & 0.126 & 0.557 & 0.683 & 0.451 & 0.451 \\
    Combined & \nicefrac{3}{2} & 1.0 & 0.083 & 0.078 & 0.041 & 0.202 & 0.243 & 0.476 & 0.477 \\
    \bottomrule
\end{tabular}
\end{table}

\begin{table}[!tb]
\centering
\caption{Rate-distortion performance of proposed methods on synthetic dataset for $\beta = 2$}
\label{table:discrete-4}
\begin{tabular}{lcrrrrrrrr}
    \toprule
    Codec & $\beta$ & $\eta$ & $R_1$ & $R_2$ & $R_0$ & $R_t$ & $R_r$ & $D_1$ & $D_2$ \\
    \midrule
    Shared & 2 & 0.001 & 3.656 & 4.48 & 2.137 & 10.273 & 12.41 & 0.044 & 0.041 \\
    Shared & 2 & 0.01 & 3.332 & 3.777 & 1.291 & 8.4 & 9.691 & 0.049 & 0.046 \\
    Shared & 2 & 0.1 & 2.116 & 2.139 & 0.133 & 4.388 & 4.521 & 0.143 & 0.147 \\
    Shared & 2 & 0.5 & 0.305 & 0.475 & 0.136 & 0.916 & 1.052 & 0.424 & 0.41 \\
    Shared & 2 & 1.0 & 0.066 & 0.239 & 0.087 & 0.392 & 0.479 & 0.474 & 0.455 \\
    \midrule
    Separated & 2 & 0.001 & 4.494 & 4.474 & 4.615 & 13.583 & 18.198 & 0.034 & 0.035 \\
    Separated & 2 & 0.01 & 3.748 & 3.535 & 1.881 & 9.164 & 11.045 & 0.044 & 0.045 \\
    Separated & 2 & 0.1 & 2.016 & 2.015 & 0.366 & 4.397 & 4.763 & 0.16 & 0.166 \\
    Separated & 2 & 0.5 & 0.168 & 0.319 & 0.099 & 0.586 & 0.685 & 0.46 & 0.428 \\
    Separated & 2 & 1.0 & 0.075 & 0.115 & 0.029 & 0.219 & 0.248 & 0.478 & 0.47 \\
    \midrule
    Combined & 2 & 0.001 & 4.215 & 4.095 & 4.854 & 13.164 & 18.018 & 0.036 & 0.037 \\
    Combined & 2 & 0.01 & 3.863 & 3.94 & 1.366 & 9.169 & 10.535 & 0.045 & 0.045 \\
    Combined & 2 & 0.1 & 2.11 & 2.042 & 0.545 & 4.697 & 5.242 & 0.137 & 0.138 \\
    Combined & 2 & 0.5 & 0.221 & 0.207 & 0.09 & 0.518 & 0.608 & 0.451 & 0.46 \\
    Combined & 2 & 1.0 & 0.099 & 0.094 & 0.033 & 0.226 & 0.259 & 0.473 & 0.474 \\
    \bottomrule
\end{tabular}
\end{table}

Elements of a random variable $X$ with sample space $\{-4,...,4\}^{2 \times 64 \times 64}$ are identically distributed according to a probability mass function (PMF) created from the normal distribution $\mathcal{N}(0,4)$, which is first clipped to $[-4, 4]$, and then quantized to $9$ symbols using the integers in the range. Contiguous element pairs were either selected to be copies, with probability $\nicefrac{4}{5}$, or independent, with probability $\nicefrac{1}{5}$. The random variable positions were then shuffled and the resulting dependency map was stored and used to generate all samples from $X$. 

Two bijective linear functions of determinant 1 are arbitrarily generated for $X_1$ and $X_2$, respectively. The output of these transformations are the corresponding targets $Z_1$ and $Z_2$ for two linear regression tasks. We dynamically generate a dataset of infinite samples from $\smash{P_{X,Z_1,Z_2}}$ on the fly. 

The mutual information $I(X_1;X_2)$ increases with the number of elements in $X_1$ that are duplicated in $X_2$. Using the Blahut–Arimoto algorithm \citep{Cover}, we generate theoretical rate-distortion curves for $X$. The resulting PMF of the input reconstruction $\smash{\hat{X}}$ allows the computation of $\smash{H(\hat{Z}_1,\hat{Z}_2)}$, $\smash{H(\hat{Z}_1) + H(\hat{Z}_2)}$, and $\smash{I(\hat{Z}_1;\hat{Z}_2)}$, for the corresponding distortion values $D_1$ and $D_2$.

To obtain empirical estimates of mutual information, we use the rate of the single channel in the Joint method as an empirical measurement $\smash{\tilde{H}(\hat{Z}_1,\hat{Z}_2)}$. We use the rates of the Independent method as empirical values $\smash{\tilde{H}(\hat{Z}_1) + \tilde{H}(\hat{Z}_2)}$. To compute an empirical estimation of the mutual information between tasks, we interpolate and extrapolate the points within the rate-distortion curves of these methods, covering all distortion points. Then, we compute $\smash{\hat{I}(\hat{Z}_1;\hat{Z}_2) = \tilde{H}(\hat{Z}_1) + \tilde{H}(\hat{Z}_2) - \tilde{H}(\hat{Z}_1,\hat{Z}_2)}$ between points with the same distortion value.

To improve task performance, all scaling convolutional layers are replaced with \textit{pixel shuffle} operations \citep{ShiCHTABRW16}, removing dimensionality bottlenecks in the architecture. Each pixel shuffle or unshuffle operation has a factor of 2, which implies that the analysis transform progressively increases the number of channels from 2 to 8, 32, 128, and finally $E = 512$. The synthesis transform decreases the number of channel in reverse order. We use 1,000 steps per training epoch and 100 per validation epoch.

Figure~\ref{fig:discrete-additional} shows additional results. The transmit and receive rate plots show that the Shared (proposed) method outperforms the Separated and Combined baselines. The common channel rate plots show in detail how $\beta$ controls the amount of information in the common channel. Tables \ref{table:discrete-1}, \ref{table:discrete-2}, \ref{table:discrete-3}, \ref{table:discrete-4} show the values used in these plots. All rate values are bitrates scaled by $\nicefrac{1}{64 \times 64}$.

\FloatBarrier
\subsection{Edge-case exploration with image classification}

\begin{figure}[!tb]
    \centering
    \begin{subfigure}{\linewidth}
        \includegraphics[width=\linewidth]{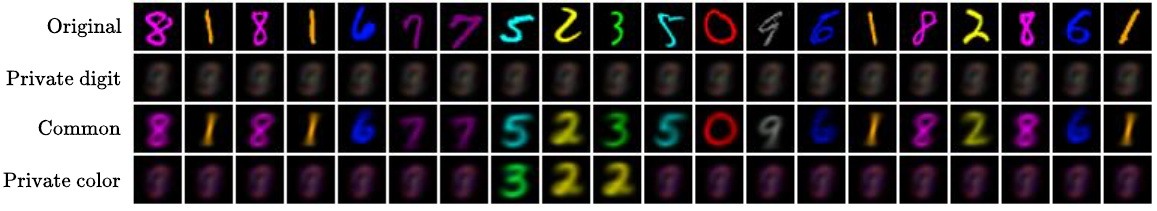}
        \caption{Dependent PMF}
    \end{subfigure}
    \begin{subfigure}{\linewidth}
        \includegraphics[width=\linewidth]{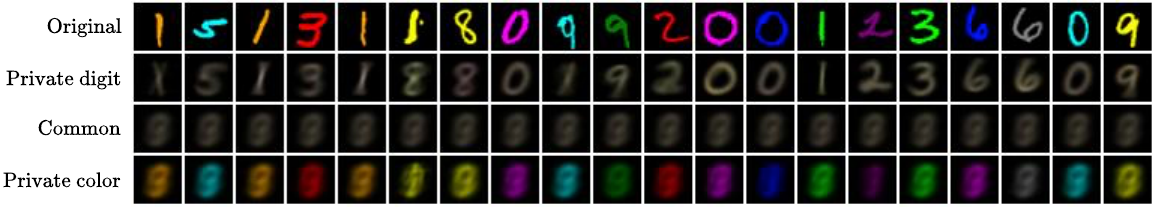}
        \caption{Independent PMF}
    \end{subfigure}
    \begin{subfigure}{\linewidth}
        \includegraphics[width=\linewidth]{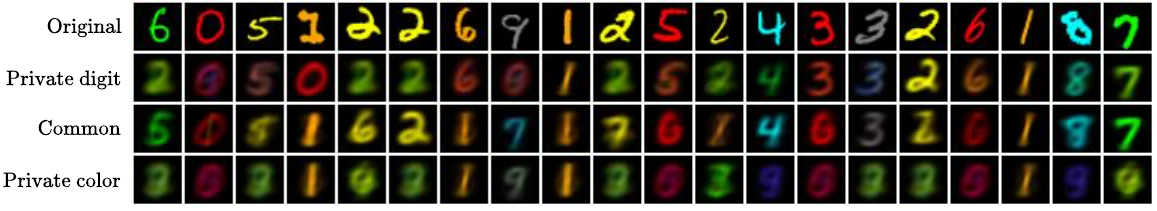}
        \caption{Mixture PMF}
    \end{subfigure}
    \caption{MNIST image reconstructions from individual channels $\smash{Y_{\{0,1,2\}}}$ produced by codecs trained on one of the 3 different PMFs discussed. The Dependent PMF places most information on the common channel. The Independent PMF places the digit and color information on the corresponding private channels. In this codec, the common channel does not seem to carry any information. The Mixture PMF shows a combination of information across channels. The correct answer is usually found in a combination of channels.}
    \label{fig:mnist-image-grid}
\end{figure}

\begin{table}[!tb]
\centering
\caption{Rate-distortion performance of the proposed method on colored MNIST}
\label{table:mnist-1}
\begin{tabular}{lrrrrrrrr}
    \toprule
    Codec & $\eta$ & $R_1$ & $R_2$ & $R_0$ & $R_t$ & $R_r$ & $\nicefrac{1}{D_1}$ & $\nicefrac{1}{D_2}$ \\
    \midrule
    Dependent & 1 & 2.908 & 5.779 & 15.84 & 24.526 & 40.366 & 0.999 & 0.999 \\
    Dependent & 10 & 1.460 & 4.071 & 10.641 & 16.172 & 26.812 & 0.998 & 0.999 \\
    Dependent & 25 & 1.571 & 2.637 & 10.524 & 14.731 & 25.255 & 0.996 & 0.998 \\
    Dependent & 50 & 1.267 & 2.500 & 9.921 & 13.688 & 23.609 & 0.995 & 0.996 \\
    Dependent & 75 & 2.186 & 0.826 & 10.366 & 13.378 & 23.743 & 0.992 & 0.989 \\
    Dependent & 100 & 1.067 & 2.318 & 8.991 & 12.376 & 21.368 & 0.975 & 0.986 \\
    \midrule
    Independent & 1 & 22.269 & 6.864 & 0.071 & 29.203 & 29.273 & 0.989 & 0.999 \\
    Independent & 10 & 15.974 & 7.030 & 0.056 & 23.060 & 23.116 & 0.983 & 0.999 \\
    Independent & 25 & 13.568 & 5.656 & 0.017 & 19.241 & 19.257 & 0.975 & 0.996 \\
    Independent & 50 & 12.073 & 5.776 & 0.019 & 17.868 & 17.887 & 0.970 & 0.969 \\
    \midrule
    Mixture & 1 & 19.23 & 8.425 & 5.149 & 32.804 & 37.952 & 0.993 & 1 \\
    Mixture & 10 & 9.322 & 5.349 & 10.966 & 25.637 & 36.603 & 0.992 & 1 \\
    Mixture & 25 & 9.965 & 4.684 & 10.025 & 24.674 & 34.699 & 0.989 & 0.999 \\
    Mixture & 100 & 9.547 & 3.660 & 9.629 & 22.835 & 32.464 & 0.975 & 0.997 \\
    \midrule
    Mixture (Independent) & 1 & 27.262 & 15.658 & 0 & 42.919 & 42.919 & 0.991 & 0.999 \\
    Mixture (Independent) & 10 & 16.182 & 18.592 & 0 & 34.773 & 34.773 & 0.988 & 0.999 \\
    Mixture (Independent) & 25 & 16.031 & 15.464 & 0 & 31.495 & 31.495 & 0.979 & 0.999 \\
    Mixture (Independent) & 50 & 14.913 & 13.892 & 0 & 28.805 & 28.805 & 0.977 & 0.998 \\
    Mixture (Independent) & 75 & 13.804 & 14.685 & 0 & 28.489 & 28.489 & 0.967 & 0.994 \\
    \bottomrule
\end{tabular}
\end{table}

We set the number of dimensions to $E=6$. Due to the low rate of the channels, the auxiliary loss term in Eq.~\ref{eq:loss}, which encourages the common representations $\smash{Y_0^{(1)}}$ and $\smash{Y_0^{(2)}}$ to match, prevents the common channel from carrying any significant amount of information. We overcome this obstacle by setting $\beta = \nicefrac{1}{10}$, encouraging the use of the common channel, and compensating for the restrictions of the auxiliary loss.

Table~\ref{table:mnist-1} shows the values obtained for the different PMFs and methods. Rates are shown as bitrates (using $\log$ functions with base 2). The inverse distortions $\nicefrac{1}{D_1}$ and $\nicefrac{1}{D_2}$ are the top-1 classification accuracies. The Independent method has very low rate on the common channel, as it would be optimal to do so. The Dependent method places some information on the private channels, but it is on average 29.22\% of the transmit rate.

To visualize the information available in these channels, we train multiple tasks to reconstruct the input image, each using the representations of a channel as input. The model architecture is the same as the proposed synthesis transform. The kernel sizes of the first and last layers are adapted to dimensional requirements of the input and output, respectively. We train to minimize the RMSE between the predictions and the input image, using the same settings described for MNIST in Section~\ref{appendix:training}. We generate reconstructed images for each PMF from corresponding codecs that produce the lowest rate.

Figure~\ref{fig:mnist-image-grid} shows the results from uniform samples of the validation set. For the Dependent and Independent PMFs, it is easily observed that the available information matches the quantitative results. In the results from the Mixture PMF, the color channel does not contain much digit information. The digit channel contains some color information but it is often incorrect. Whenever a private channel is incorrect, the common channel seems to contain the correct answer.  

\FloatBarrier
\subsection{Rate-distortion performance in computer vision tasks}

\begin{table}[!tb]
\centering
\caption{Rate-distortion performance of the proposed method on Cityscapes}
\label{table:cv-1}
\begin{tabular}{lrrrrrrrr}
    \toprule
    Codec & $\eta$ & $R_1$ & $R_2$ & $R_0$ & $R_t$ & $R_r$ & $\nicefrac{1}{D_1}$ & $D_2$ \\
    \midrule
    Joint & 0.001 & 0 & 0 & 2.444 & 2.444 & 4.888 & 0.670 & 5.493 \\
    Joint & 0.01 & 0 & 0 & 2.069 & 2.069 & 4.138 & 0.669 & 5.489 \\
    Joint & 0.05 & 0 & 0 & 1.611 & 1.611 & 3.222 & 0.670 & 5.497 \\
    Joint & 0.1 & 0 & 0 & 1.527 & 1.527 & 3.054 & 0.669 & 5.496 \\
    Joint & 1 & 0 & 0 & 1.074 & 1.074 & 2.148 & 0.663 & 5.497 \\
    \midrule
    Independent & 0.001 & 2.113 & 2.296 & 0 & 4.409 & 4.409 & 0.666 & 5.477 \\
    Independent & 0.01 & 1.808 & 1.892 & 0 & 3.700 & 3.700 & 0.668 & 5.480 \\
    Independent & 0.05 & 1.227 & 1.337 & 0 & 2.564 & 2.564 & 0.665 & 5.493 \\
    Independent & 0.1 & 1.104 & 1.191 & 0 & 2.295 & 2.295 & 0.663 & 5.493 \\
    Independent & 1 & 0.788 & 0.846 & 0 & 1.634 & 1.634 & 0.660 & 5.513 \\
    \midrule
    Proposed & 0.001 & 1.384 & 1.482 & 0.330 & 3.196 & 3.526 & 0.666 & 5.482 \\
    Proposed & 0.01 & 1.329 & 1.415 & 0.243 & 2.987 & 3.230 & 0.669 & 5.487 \\
    Proposed & 0.05 & 1.217 & 1.285 & 0.329 & 2.831 & 3.160 & 0.669 & 5.488 \\
    Proposed & 0.1 & 1.110 & 1.151 & 0.414 & 2.675 & 3.089 & 0.669 & 5.487 \\
    Proposed & 1.0 & 0.568 & 0.568 & 0.409 & 1.545 & 1.954 & 0.666 & 5.488 \\
    Proposed & 5.0 & 0.315 & 0.311 & 0.319 & 0.945 & 1.264 & 0.662 & 5.496 \\
    \bottomrule
\end{tabular}
\end{table}

\begin{table}[!tb]
\centering
\caption{Rate-distortion performance of the proposed method on COCO 2017}
\label{table:cv-2}
\begin{tabular}{lrrrrrrrr}
    \toprule
    Codec & $\eta$ & $R_1$ & $R_2$ & $R_0$ & $R_t$ & $R_r$ & $\nicefrac{1}{D_1}$ & $\nicefrac{1}{D_2}$ \\
    \midrule
    Joint & 0.001 & 0 & 0 & 3.513 & 3.513 & 7.026 & 0.453 & 0.646 \\
    Joint & 0.05 & 0 & 0 & 3.198 & 3.198 & 6.396 & 0.454 & 0.643 \\
    Joint & 0.1 & 0 & 0 & 2.997 & 2.997 & 5.994 & 0.453 & 0.642 \\
    Joint & 1 & 0 & 0 & 1.980 & 1.980 & 3.960 & 0.449 & 0.638 \\
    \midrule
    Independent & 0.001 & 3.676 & 3.543 & 0 & 7.219 & 7.219 & 0.453 & 0.645 \\
    Independent & 0.01 & 3.498 & 3.264 & 0 & 6.762 & 6.762 & 0.453 & 0.644 \\
    Independent & 0.05 & 3.113 & 2.973 & 0 & 6.086 & 6.086 & 0.454 & 0.643 \\
    Independent & 0.1 & 2.855 & 2.768 & 0 & 5.623 & 5.623 & 0.453 & 0.643 \\
    Independent & 1 & 1.615 & 1.605 & 0 & 3.220 & 3.220 & 0.448 & 0.635 \\
    \midrule
    Proposed & 0.001 & 2.725 & 2.681 & 1.482 & 6.888 & 8.370 & 0.454 & 0.646 \\
    Proposed & 0.5 & 1.532 & 1.559 & 1.002 & 4.093 & 5.095 & 0.455 & 0.646 \\
    Proposed & 1 & 1.230 & 1.250 & 0.842 & 3.322 & 4.164 & 0.452 & 0.643 \\
    Proposed & 2.5 & 0.822 & 0.845 & 0.659 & 2.326 & 2.985 & 0.449 & 0.637 \\
    \bottomrule
\end{tabular}
\end{table}

For semantic segementation on Cityscapes, we use pretrained weights provided by \citet{ChenZPSA18}. For the object detection and keypoint detection tasks on COCO 2017, we use pretrained weights provided by \citet{torchvision2016}. The depth estimation model is trained from scratch without rate constraints and then used as a frozen pretrained model in the proposed method. Since the size of the Cityscape images is quite large, for tractability, we learn a downsampler of the input images and an upsampler of the output predictions of the task model, for the semantic segmentation and depth estimation task models. The task distortions are still computed against the original Cityscapes targets. The factor of the samplers is 2.

We set $E=192$ for all models except those trained on COCO 2017 with the proposed method, which required $E=288$. Since the task models are frozen, a reconstruction loss between the input and the output of the synthesis transforms, before the task models, improves rate-distortion performance. The weight of this reconstruction loss is listed in Table~\ref{table:parameters}.

We note that the rates produced by our methods are relatively large. This is attributed to the synthesis transform being connected to the first layer of the task models and them remaining frozen, effectively encouraging the reconstruction of the input images. Since the images are highly compressed JPEG images, they contain many artifacts that might need to be reconstructed, requiring additional rate. Interestingly, the input reconstruction loss seems to improve rate-distortion performance, even though intuitively one might think it would not.

Tables \ref{table:cv-1} and \ref{table:cv-2} show the values obtained by our proposed method against the Independent and Joint baselines. All results were collected from validation sets. The rates are reported in bits-per-pixel (BPP). In the Cityscapes experiments, $\nicefrac{1}{D_1}$ and $D_2$ correspond to the mIoU and RMSE metrics, respectively. In the COCO 2017 experiments, $\nicefrac{1}{D_1}$ and $\nicefrac{1}{D_2}$ correspond to the object detection mAP and keypointing mAP metrics, respectively.

All experiments are trained with $\beta=1$, which, as previously discussed, due to the constraints of the auxiliary loss, does not necessarily optimize for the transmit rate. In fact, since the results show that the receive rate of the proposed method is better than the Independent method, the common channel must not contain all empirical mutual information possible. 
\end{document}